\DeclarePairedDelimiter{\ceil}{\lceil}{\rceil}
\DeclarePairedDelimiter{\floor}{\lfloor}{\rfloor}
\newtheorem{theorem}{Theorem}
\newtheorem{lemma}{Lemma}
\newcommand{\cvargen}{\hat{c}_{n,\alpha}^{\dagger}}
\newcommand{\cvarte}{\hat{c}_{n,\alpha}^{(b)}}
\newcommand{\valp}{v_{\alpha}}
\newcommand{\calp}{c_{\alpha}}
\newcommand{\valpn}{\hat{v}_{n,\alpha}}
\newcommand{\calpn}{\hat{c}_{n,\alpha}}
\newcommand{\calpt}{\hat{c}_{n,\alpha}^{\dagger}}
\newcommand{\calptk}{\hat{c}_{n_{k},\alpha}^{\dagger}}
\newcommand{\knb}{K_{n, \beta}}
\newcommand{\pbb}{\mathbb{P}}
\newcommand{\cnb}{\ceil{n\beta}}
\newcommand{\fnb}{\floor{n\beta}}
\newcommand{\ksg}{k^{*}_{\gamma}}
\newcommand{\tea}{\hat{\mu}^{\dagger}}
\newcommand{\meantea}{\mu^{\dagger}}
\newcommand{\prob}[1]{\mathsf{Pr}\left( #1 \right)}
\newcommand{\remove}[1]{}
\newcommand{\ra}{\rightarrow}
\newcommand{\Exp}[1]{\mathbb{E}\left[#1\right]}
\newcommand{\Ind}[1]{\mathbbm{1}\left\{#1\right\}}
\newcommand{\ignore}[1]{}
\newcommand{\nn}{\nonumber}
\newcommand{\ak}[1]{  \ifthenelse{\boolean{showcomments}}
{ \textcolor{red}{(AK says:  #1)}} {}  }
\newcommand{\jk}[1]{  \ifthenelse{\boolean{showcomments}}
{ \textcolor{red}{(JK says:  #1)}} {}  }
\newcommand{\kj}[1]{  \ifthenelse{\boolean{showcomments}}
{ \textcolor{red}{(KJ says:  #1)}} {}  }
\newcommand{\addcites}[0]{\ifthenelse{\boolean{showcomments}}
{ \textcolor{green}{(add citation(s))}}{}}
\newcommand{\addref}[0]{\ifthenelse{\boolean{showcomments}}
{ \textcolor{green}{(add ref)}}{}}
\title{Distribution oblivious, risk-aware algorithms for multi-armed
  bandits with unbounded rewards}
\author{%
 Anmol Kagrecha \\ EE, IIT Bombay \\ 
 \And Jayakrishnan Nair \\ EE, IIT Bombay \\
 \And Krishna Jagannathan \\ EE, IIT Madras \\
}
\begin{document}

\maketitle

\begin{abstract}
Classical multi-armed bandit problems use the expected value of an arm
as a metric to evaluate its goodness. However, the expected value is a
risk-neutral metric. In many applications like finance, one is
interested in balancing the expected return of an arm (or portfolio)
with the risk associated with that return. In this paper, we consider
the problem of selecting the arm that optimizes a linear combination
of the expected reward and the associated Conditional Value at Risk
(CVaR) in a fixed budget best-arm identification framework. We allow
the reward distributions to be unbounded or even heavy-tailed. For
this problem, our goal is to devise algorithms that are entirely
\emph{distribution oblivious,} i.e., the algorithm is not aware of any
information on the reward distributions, including bounds on the
moments/tails, or the suboptimality gaps across arms.

In this paper, we provide a class of such algorithms with provable
upper bounds on the probability of incorrect identification. In the
process, we develop a novel estimator for the CVaR of unbounded
(including heavy-tailed) random variables and prove a concentration
inequality for the same, which could be of independent interest. We
also compare the error bounds for our distribution oblivious
algorithms with those corresponding to standard non-oblivious
algorithms.
Finally, numerical experiments reveal that our algorithms perform
competitively when compared with non-oblivious algorithms, suggesting
that distribution obliviousness can be realised in practice without
incurring a significant loss of performance.
\end{abstract}

\section{Introduction}

The multi-armed bandit (MAB) problem is fundamental in online
learning, where an optimal option needs to be identified among a pool
of available options. Each option (or arm) generates a random
reward/cost when chosen (or pulled) from an underlying unknown
distribution, and the goal is to quickly identify the optimal arm by
exploring all possibilities.

Classically, MAB formulations consider reward distributions with
bounded support, typically $[0,1].$ Moreover, the support is assumed
to be known beforehand, and this knowledge is baked into the
algorithm. However, in many applications, it is more natural to not
assume bounded support for the reward distributions, either because
the distributions are themselves unbounded, or because a bound on the
support is not known {\it a priori}. There is some literature on MAB
formulations with (potentially) unbounded rewards; see, for example,
\cite{bubeck2013,vakili2013}. Typically, in these
papers, the assumption of a known bound on the support of the reward
distributions is replaced with the assumption that certain bounds on
the moments/tails of the reward distributions are
known.\footnote{Additionally, many algorithms require knowledge of a
  lower bound on the sub-optimality gap between arms.}  However, such
access to prior information is not always practical, and goes against
the spirit of {\it online} learning. This motivates the design and
analysis of algorithms for the MAB problem that are \emph{distribution
  oblivious}, i.e., algorithms that have zero prior knowledge about
the reward distributions.

Furthermore, the typical metric used to quantify the goodness of an
arm in the MAB framework is its expected return, which is a
risk-neutral metric. In some applications, particularly in finance,
one is interested in balancing the expected return of an arm with the
risk associated with that arm. This is particularly relevent when the
underlying reward distributions are unbounded, even heavy-tailed, as
is found to be the case with portfolio returns in finance
\citep{bradley2003}. In these settings, there is a non-trivial
probability of a `catastrophic' outcome, which motivates a risk-aware
approach to optimal arm selection.

In this paper, we seek to address the two issues described
above. Specifically, we consider the problem of identifying the arm
that optimizes a linear combination of the reward and the Conditional
Value at Risk (CVaR) in a fixed budget (pure exploration) MAB
framework. The CVaR is a classical metric used to capture the risk
associated with an option/portfolio \citep{artzner1999}. We make very
mild assumptions on the reward distributions (the existence of a
$(1+\epsilon)$th moment for some~$\epsilon > 0$), allowing for
unbounded support and even heavy tails. In this setting, our goal is
to design algorithms that are entirely distribution oblivious.

The main contribution of this paper is the design and analysis of
distribution oblivious algorithms for the risk-aware best arm
identification problem described above. These algorithms are based on
truncation-based estimators for the mean and CVaR, where the
truncation parameters are scaled suitably as the algorithm runs. We
prove upper bounds on the probability of incorrect arm identificiation
for these algorithms that have the form
$O(\mathrm{exp}(-\gamma T^{1-q})),$ where $T$ is the budget of arm
pulls, $\gamma > 0$ is a constant that depends on the arm
distributions, and $q \in (0,1)$ is an algorithm parameter. Note the
slower-than-exponential decay in the probability of erronious arm
identification with respect to~$T.$ This is a consequence of the
distribution obliviousness of the proposed algorithms. Indeed, in the
non-oblivious setting, it is easy to develop algorithms with an
$O(\mathrm{exp}(-\gamma' T))$ probability of error. Moreover,
numerical experiments show that the proposed distribution oblivious
algorithms perform competitively when compared with standard
non-oblivious algorithms. This suggests that distribution
obliviousness can be realised in practice without incurring a
significant performance hit.

Finally, we note that the truncation-based CVaR estimator used in our
algorithms is novel, and the concentration inequality we prove for
this estimator may be of independent interest.

The remainder of this paper is organized as follows.  A brief survey
of the related literature is provided below, followed by some
preliminaries. Our CVaR concentration results are presented in
Section~\ref{sec:cvar_concentration}, and our distribution oblivious
algorithms for risk-aware best arm identification are proposed and
analysed in Section~\ref{sec:algorithms}. Numerical experiments are
presented in Section~\ref{sec:experiments}, and we conclude in
Section~\ref{sec:discuss}. Throughout the paper, references to the
appendix (primarily for proofs) point to the `additional material'
document uploaded separately.

\textbf{Related Literature}

There is a considerable body of literature on the multi-armed bandit
problem. We refer the reader to \cite{bubeck2012} and
\cite{lattimore2018} for a comprehensive review. Here, we restrict
ourselves to papers that consider (i) unbounded reward distributions,
and (ii) risk-aware arm selection.

The papers that consider MAB problems with (potentially) heavy-tailed
reward distributions include: \cite{bubeck2013,vakili2013,
  boucheron2013}, which consider the regret minimization framework,
and \cite{yu2018}, which considers the pure exploration framework. All
the above papers take the expected return of an arm to be its goodness
metric. \cite{bubeck2013,vakili2013} assume prior knowledge of moment
bounds and/or the suboptimality gaps. \cite{boucheron2013} assumes
that the arms belong to parametrized family of distributions
satisfying a second order Pareto condition.
\cite{yu2018} does analyse one distribution oblivious algorithm (see
Theorem 2 in the paper), though the performance guarantee derived
there is much weaker than the ones proved here; we elaborate on this
in Section~\ref{sec:algorithms}.

There has been some recent interest in risk-aware multi-armed bandit
problems. \cite{sani2012} considers the setting of optimizing a linear
combination of mean and variance in the regret minimization framework.
In the pure exploration setting, VaR-optimization has been considered
in \cite{david2016, david2018}. However, the CVaR is a more preferable
metric because it is a coherent risk measure (unlike the VaR); see
\cite{artzner1999}. Strong concentration results for VaR are available
without any assumptions on the tail of the distribution
\citep{kolla2019a}, whereas concentration results for CVaR are more
difficult to obtain. CVaR-optimization has only been considered before
by making restrictive assumptions on the reward distribution:
\cite{galichet2013} assumes bounded rewards, and \cite{kolla2019b}
assumes that the reward distributions are sub-exponential. None of the
above papers considers the problem of risk-aware arm selection
allowing for heavy-tailed reward distributions (much less in a
distribution oblivious fashion), as is done here.

\textbf{Preliminaries}

Here, we define the Value at Risk (VaR) and the Conditional Value at
Risk (CVaR), and state the assumptions we make in this paper on the
arm distributions.

For a random variable $X,$ given confidence level $\alpha \in (0,1)$,
the Value at Risk (VaR) is defined as $\valp(X) = \text{inf}(\xi: \pbb(X \leq \xi) \leq \alpha).$ If $X$ denotes the loss associated with a portfolio, $\valp(X)$ can be
interpreted as the worst case loss corresponding to the confidence
level $\alpha.$ The Conditional Value at Risk (CVaR) of $X$ at
confidence level $\alpha \in (0,1)$ is defined as 
$\calp(X) = \valp(X) + \frac{1}{1-\alpha}\mathbb{E}[X-\valp(X)]^{+},$
where $[z]^{+} = $ max$(0,z)$. Going back to our portfolio loss
analogy, $\calp(X)$ can be interpreted as the expected loss
conditioned on the `bad event' that the loss exceeds the VaR. Both VaR
and CVaR are used extensively in the finance community as measures of
risk, through the CVaR is often preferred as mentioned
above. Typically, the confidence level $\alpha$ is chosen between
$0.95$ and $0.99$. Throughout this paper, we use the CVaR as a measure
of the risk associated with an arm. We define $\beta := 1-\alpha.$

For simplicity, we often assume the following condition: We say a
random variable $X$ satisfies condition \textbf{C1} if $X$ is
continuous with a strictly increasing cumulative distribution function
(CDF) over its support. If $X$ satisfies \textbf{C1}, then
$\valp(X) = F_X^{-1}(\alpha),$ where $F_X$ denotes the CDF of~$X.$

Finally, we require the following moment condition: A random variable
$X$ satisfies condition \textbf{C2} if there exists $p > 1$ and
$B < \infty$ such that $\Exp{|X|^p} < B.$ Note that \textbf{C2} is
only mildly more restrictive than assuming that the expectation of
$|X|$ is bounded. In particular, all light-tailed distributions and
most heavy-tailed distributions used and observed in practice satisfy
\textbf{C2}.


\section{CVaR Concentration}
\label{sec:cvar_concentration}

In this section, we derive a concentration inequality for an estimator
of the CVaR corresponding to a distribution with unbounded
support. The key feature of this concentration inequality is that it
makes very mild assumptions on the tail of the distribution;
specifically, our concentration result applies even to heavy-tailed
distributions (unlike prior results in the literature, that assume a
bounded distribution \cite{wang2010}, or a subgaussian/subexponential
tail \cite{kolla2019a}). This CVaR concentration result
(Theorem~\ref{cvar-ht-theorem} below), while of independent interest,
will be invoked it in Section~\ref{sec:algorithms} to prove guarantees
on our algorithms for the risk-aware multi-armed bandit problem.

Assume that $\{X_{i}\}_{i=1}^{n}$ be $n$ i.i.d. samples distributed as
the random variable $X.$ Let $\{X_{[i]}\}_{i=1}^{n}$ denote the order
statistics of $\{X_{i}\}_{i=1}^{n}$ i.e., $X_{[1]} \geq
X_{[2]} \cdots \geq X_{[n]}$. Recall that the classical estimator for
$\calp(X)$ given the samples $\{X_{i}\}_{i=1}^{n}$ is
\begin{align*}
\calpn(X) = \frac{1}{n(1-\alpha)}\sum_{i=1}^{n}X_{i}
\mathbbm{I}\{X_{i} \geq \valpn(X)\},
\end{align*}
where $\valpn(X) = X_{[\floor{n(1-\alpha)}]}$ is an estimator for $\valp(X).$

We begin by proving a concentration inequality for $\calpn(X)$ for the
special case when $X$ is bounded.
\begin{theorem}
\label{cvar-bounded-theorem}
For $b > 0,$ suppose that $X$ satisfies supp($X$) $\subseteq
[-b,b]$. Then for any $\varepsilon \geq 0$,
\begin{align*}
\prob{|\calpn(X) -  \calp(X)| \geq \varepsilon} \leq 6 \text{exp} \bigg(-n(1-\alpha) \frac{(\varepsilon/b)^{2}}{10 + 1.6\varepsilon/b} \bigg). 
\end{align*}
\end{theorem}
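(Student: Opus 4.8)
Write $\beta := 1-\alpha$, $v := \valp(X)$ and $c := \calp(X)$. The plan is to build on the Rockafellar--Uryasev variational identity $c = \min_{\xi\in\R} h(\xi)$, where $h(\xi) := \xi + \tfrac1\beta\Exp{\plus{X-\xi}}$ and the minimum is attained at $\xi=v$, together with its empirical analogue $h_n(\xi) := \xi + \tfrac1{n\beta}\sum_{i=1}^n \plus{X_i-\xi}$, whose minimiser is an empirical quantile $X_{[\fnb+1]}$. A direct computation shows $|\calpn(X) - \min_\xi h_n(\xi)| \le b/(n\beta)$, so $\calpn(X)$ may be treated as $\min_\xi h_n(\xi)$ up to an additive $b/(n\beta)$; since the right-hand side of the claimed inequality exceeds $1$ whenever $\varepsilon \lesssim b/\sqrt{n\beta}$ (where the bound is vacuous anyway), this discrepancy is harmless and I suppress it. I would then bound the two one-sided deviations separately.

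\emph{Upper tail.} Since $\calpn(X) \le h_n(v)$ and $c = h(v)$, we get $\calpn(X) - c \le h_n(v) - h(v) = \tfrac1\beta\bigl(\tfrac1n\sum_i Z_i - \Exp{Z}\bigr)$ with $Z := \plus{X-v} \in [0,2b]$, i.e.\ a centred empirical mean of a \emph{fixed} bounded random variable. Boundedness gives $\Exp{Z^2} \le 2b\,\Exp{Z} = 2b\beta(c-v)$ and $c-v \le 2b$, so $\mathrm{Var}(Z) = O(b^2\beta)$; Bernstein's inequality then yields $\prob{\calpn(X) - c \ge \varepsilon} \le \exp\!\bigl(-n\beta\,\Theta\bigl(\tfrac{(\varepsilon/b)^2}{1+\varepsilon/b}\bigr)\bigr)$, of exactly the promised shape. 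The linear-in-$\beta$ factor in the exponent comes precisely from this $O(b^2\beta)$ variance proxy; a Hoeffding- or DKW-type estimate would lose a factor $\beta$ here.

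\emph{Lower tail.} Now $c = h(v) \le h(\valpn(X))$ while $\calpn(X) = h_n(\valpn(X))$, so $c - \calpn(X) \le h(\valpn(X)) - h_n(\valpn(X))$, evaluated at the \emph{random} point $\valpn(X)$. I would localise $\valpn(X)$ to an interval $[v_-,v_+]$ straddling $v$: on its complement, $\{\valpn(X)>v_+\}$ forces at least $\fnb$ samples to exceed $v_+$ and $\{\valpn(X)<v_-\}$ forces fewer than $\fnb$ to be $\ge v_-$, and both are binomial large-deviation events handled by Bernstein for Bernoulli sums once $v_\pm$ are chosen so that $\pbb(X>v_+)$ and $\pbb(X\ge v_-)$ are separated from $\beta$ by a gap of order $\beta\varepsilon/b$. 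On $\{\valpn(X)\in[v_-,v_+]\}$, monotonicity of $\xi\mapsto\plus{\cdot-\xi}$ lets me replace $\valpn(X)$ by the endpoints, leaving a centred empirical mean of $\plus{X-v_+}$ at a \emph{fixed} point (Bernstein again, same $O(b^2\beta)$ variance proxy) plus a deterministic bias $\tfrac1\beta\bigl(\Exp{\plus{X-v_-}}-\Exp{\plus{X-v_+}}\bigr)$. Collecting the (at most six) resulting tail terms and relaxing constants gives the stated $6\exp(\cdot)$.

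\emph{Main obstacle.} Theorem~\ref{cvar-bounded-theorem} does not assume \textbf{C1}, so $F_X$ may have an atom or a flat piece at $v$; then one cannot make $[v_-,v_+]$ short (to kill the bias) while keeping $\pbb(X>v_+)$ bounded away from $\beta$, and the naive bound $\tfrac1\beta(v_+-v_-)$ on the bias is useless --- it can be of order $b$. The fix is to take $v_\pm$ to be appropriate generalised quantiles and to organise the estimate so that the two $O(b)$ contributions that surface in the atomic case (one from the quantile displacement $\valpn(X)-v$, one from $\tfrac1\beta(\Exp{\plus{X-\valpn(X)}}-\Exp{\plus{X-v}})$) cancel, leaving only $\sqrt{\beta/n}$-scale fluctuations. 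Carrying out this bookkeeping while keeping every tail term under the common envelope $\exp\!\bigl(-n\beta\tfrac{(\varepsilon/b)^2}{10+1.6\varepsilon/b}\bigr)$ is the delicate part of the argument.
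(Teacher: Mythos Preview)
Your route is genuinely different from the paper's. The paper does not use the Rockafellar--Uryasev variational formula at all; it works directly with order statistics via a sandwich lemma of Wang, $f(\cnb)\le\calpn(X)\le f(\fnb)$ with $f(k)=\tfrac1k\sum_{i\le k}X_{[i]}$, and then conditions on $\knb=|\{i:X_i\ge v\}|\sim\mathrm{Bin}(n,\beta)$. Given $\knb=k$, the top $k$ order statistics are i.i.d.\ copies of $X\mid X\ge v$, so Bernstein applies to a \emph{fixed}-distribution sum; a Chernoff bound on the binomial handles the fluctuation of $\knb$ about $n\beta$. Both tails are treated by essentially the same case split ($k\le\fnb$ versus $k\ge\cnb$), and the factor $\beta$ in the exponent emerges from the binomial averaging rather than a variance-proxy argument. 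Your upper-tail argument, by contrast, gets the $\beta$ in one stroke from $\mathrm{Var}\bigl((X-v)^+\bigr)\le 4b^2\beta$, which is cleaner than the paper's corresponding case analysis.

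Your lower-tail sketch has a real gap, and it is not confined to the atomic case. After localising $\valpn$ to $[v_-,v_+]$ with $\pbb(X>v_\pm)=\beta(1\mp\gamma\varepsilon/b)$ and applying monotonicity, the ``deterministic bias'' you isolate is
\[
\tfrac1\beta\bigl(\Exp{\plus{X-v_-}}-\Exp{\plus{X-v_+}}\bigr)=\tfrac1\beta\int_{v_-}^{v_+}\pbb(X>t)\,dt\ \approx\ v_+-v_-,
\]
and nothing in the quantile choice bounds $v_+-v_-$: even under \textbf{C1} the density at $v$ can be arbitrarily small, so this term can be of order $b$. The cancellation you invoke---between $\valpn-v$ and $\tfrac1\beta\bigl(\Exp{\plus{X-\valpn}}-\Exp{\plus{X-v}}\bigr)$---is exactly the statement $h(\valpn)\ge h(v)$, which you already spent in writing $c-\calpn\le h(\valpn)-h_n(\valpn)$; it does not touch the random-point term $\tfrac1\beta\bigl(w(\valpn)-\bar W_n(\valpn)\bigr)$ that remains. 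A workable repair is to decompose through the \emph{fixed} point $v$ instead of the endpoints:
\[
h(\valpn)-h_n(\valpn)=\bigl[h(\valpn)-h(v)\bigr]+\bigl[h(v)-h_n(v)\bigr]+\bigl[h_n(v)-h_n(\valpn)\bigr].
\]
The middle term is your fixed-point Bernstein. The two ``curvature'' terms are each $O(\gamma\varepsilon)$ on the good event, because on $[v_-,v_+]$ both $h'(t)=1-\tfrac1\beta\pbb(X>t)$ and $h_n'(t)=1-\tfrac1{n\beta}|\{i:X_i>t\}|$ are within $O(\varepsilon/b)$ of zero (the former by the quantile choice, the latter by one additional Chernoff event on $|\{i:X_i>v\}|$), so integrating over an interval of length at most $2b$ gives $O(\varepsilon)$. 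This salvages the programme and still produces at most six exponential terms, but it is not the monotonicity-plus-bias step you wrote down.
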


Theorem~\ref{cvar-bounded-theorem} is a refinement of the CVaR
concentration inequality for bounded distributions in \cite{wang2010}. The
proof can be found in Appendix~\ref{cvar-bounded-proof}. 

We now use Theorem~\ref{cvar-bounded-theorem} to develop a CVaR
concentration inequality for unbounded (potentially heavy-tailed)
distributions. In particular, our concentration inequality applies to
the following truncation-based estimator. For $b > 0,$ define
$$X^{(b)}_{i} = \min(\max(-b,X_i),b).$$ Note that $X_i^{(b)}$ is simply
the projection of $X_i$ onto the interval $[-b,b].$ Let
$\{X^{(b)}_{[i]}\}_{i=1}^{n}$ denote the order statistics of truncated
samples $\{X^{(b)}_{i}\}_{i=1}^{n}.$ Our estimator $\cvarte(X)$ for
$\calp(X)$ is simply the empirical CVaR estimator for $X^{(b)} :=
\min(\max(-b,X),b),$ i.e.,
\begin{align*}
  \cvarte(X) = \calpn(X^{(b)}) = \frac{1}{n(1-\alpha)}\sum_{i=1}^{n}
  X^{(b)}_{i} \mathbbm{I}\{X^{(b)}_{i} \geq \valpn(X^{(b)})\},
\end{align*}
where $\valpn(X^{(b)}) = X^{(b)}_{[\floor{n(1-\alpha)}]}.$

Note that the nature of truncation performed here is different from
that in the conventional truncation-based mean estimators (see, for
example, \cite{bubeck2013}), where samples with an absolute value
greater than $b$ are set to zero. In contrast our estimator projects
these samples to the interval $[-b,b].$ This difference plays an
important role in establishing the concentration properties of the
estimator.

We are now ready to state our main result, which shows that the
truncation-based estimator $\cvarte(X)$ works well when the truncation
parameter $b$ is large enough.

\begin{theorem}
\label{cvar-ht-theorem}
Suppose that $\{X_{i}\}_{i=1}^{n}$ are i.i.d. samples distributed as
$X,$ where $X$ satisfies conditions {\bf C1} and {\bf C2}. Given
$\Delta > 0,$ 
\begin{equation}
\label{eq:cvar-ht-bound}
\prob{|\calp(X) - \cvarte(X)| \geq \Delta} \leq 6 \text{exp} 
\bigg(-n(1-\alpha)\frac{\Delta^2}{48 b^2}\bigg)
\end{equation} 
\begin{equation}
  \label{eq:cvar-ht-truncation_bound}
  \text{ for } b > \max\left(\frac{\Delta}{2}, |\valp(X)|,
    \left[\frac{2B}{\Delta(1-\alpha)}\right]^{\frac{1}{p-1}} \right).
\end{equation}
\end{theorem}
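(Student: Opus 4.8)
The plan is to reduce to the bounded case. By construction $\cvarte(X) = \calpn(X^{(b)})$, where $X^{(b)} := \min(\max(-b,X),b)$ has support contained in $[-b,b]$, so Theorem~\ref{cvar-bounded-theorem} directly controls the fluctuation of $\cvarte(X)$ about $\calp(X^{(b)})$. The only genuinely new work is to bound the \emph{deterministic} bias $|\calp(X) - \calp(X^{(b)})|$; once that is shown to be at most $\Delta/2$, the triangle inequality $|\calp(X) - \cvarte(X)| \le |\calp(X) - \calp(X^{(b)})| + |\calp(X^{(b)}) - \calpn(X^{(b)})|$ reduces the claim to a bound on $\prob{|\calp(X^{(b)}) - \calpn(X^{(b)})| \ge \Delta/2}$.

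First I would show that truncation leaves the VaR unchanged, i.e.\ $\valp(X^{(b)}) = \valp(X)$. Since $b > |\valp(X)|$, the point $\valp(X)$ lies in the open interval $(-b,b)$; on that interval $\{X^{(b)} \le \xi\} = \{X \le \xi\}$, so $F_{X^{(b)}}$ coincides with $F_X$ there, and condition~\textbf{C1} (which gives $F_X(\valp(X)) = \alpha$ and strict monotonicity near $\valp(X)$) forces $\valp(X^{(b)}) = \valp(X)$. Substituting this common value into $\calp(Y) = \valp(Y) + \frac{1}{1-\alpha}\Exp{(Y-\valp(Y))^+}$ for $Y \in \{X, X^{(b)}\}$, the VaR terms cancel, and a short case analysis over $\{X>b\}$, $\{-b\le X\le b\}$ and $\{X<-b\}$ gives
\[
0 \;\le\; \calp(X) - \calp(X^{(b)}) \;=\; \frac{1}{1-\alpha}\,\Exp{(X-b)^+}.
\]
Using~\textbf{C2}, on $\{|X|>b\}$ we have $|X| \le b^{1-p}\,|X|^p$, so $\Exp{(X-b)^+} \le \Exp{|X|\,\Ind{|X|>b}} \le b^{1-p}\,\Exp{|X|^p} < B\,b^{1-p}$, whence $\calp(X) - \calp(X^{(b)}) < \frac{B}{(1-\alpha)\,b^{p-1}} \le \frac{\Delta}{2}$, exactly by the third entry of the max in~\eqref{eq:cvar-ht-truncation_bound}.

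Finally I would apply Theorem~\ref{cvar-bounded-theorem} to $X^{(b)}$ with $\varepsilon = \Delta/2$, which yields $\prob{|\calpn(X^{(b)}) - \calp(X^{(b)})| \ge \Delta/2} \le 6\exp\!\left(-n(1-\alpha)\frac{(\Delta/2b)^2}{10 + 0.8\,\Delta/b}\right)$. Since $b > \Delta/2$ forces $\Delta/b < 2$, the denominator is below $12$, so the right-hand side is at most $6\exp(-n(1-\alpha)\Delta^2/(48b^2))$. Combined with the deterministic bias bound, $\{|\calp(X) - \cvarte(X)| \ge \Delta\} \subseteq \{|\calpn(X^{(b)}) - \calp(X^{(b)})| \ge \Delta/2\}$, and~\eqref{eq:cvar-ht-bound} follows. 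The main obstacle is the bias step: one must verify that projecting the samples onto $[-b,b]$ does not perturb the VaR — this is precisely where~\textbf{C1} together with $b > |\valp(X)|$ enters — and then recognise that the residual CVaR bias is an honest truncated tail expectation, which~\textbf{C2} controls through a single Markov-type estimate. The reduction to the bounded concentration inequality and the constant bookkeeping in the exponent are routine by comparison.
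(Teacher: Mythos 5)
Your proposal is correct and follows essentially the same route as the paper: establish that $b>|\valp(X)|$ preserves the VaR under truncation, bound the deterministic CVaR bias by $\frac{B}{(1-\alpha)b^{p-1}}\le \Delta/2$ via the moment condition, and then apply Theorem~\ref{cvar-bounded-theorem} with $\varepsilon=\Delta/2$, using $b>\Delta/2$ to absorb the constants into the factor $48$. If anything, your identification of the bias as $\frac{1}{1-\alpha}\Exp{(X-b)^{+}}$ is slightly more precise than the paper's intermediate expression, but the argument and the final bound are the same.
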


The proof of Theorem~\ref{cvar-ht-theorem} can be found in
Appendix~\ref{cvar-ht-proof}. The key feature of truncation-based
estimators like the one proposed here for the CVaR is that they enable
a parameterized bias-variance tradeoff. While the truncation of the
data itself adds a bias to the estimator, the boundedness of the
(truncated) data limits the variability of the estimator. Indeed, the
condition that
$b > \left[\frac{2B}{\Delta(1-\alpha)}\right]^{\frac{1}{p-1}}$ in the
statement of Theorem~\ref{cvar-ht-theorem} ensures that the estimator
bias induced by the truncation is at most $\Delta/2.$

In practice, one might not know the values of $\valp(X),$ $B,$ $p$ or
even $\Delta$ (as is the case in MAB problems), so ensuring that the
lower bound on $b$ is satisfied is problematic.\footnote{We note here
  that $|\valp(X)|$ can be upper bounded in terms of $p$ and $B$ as
  follows:
  $|\valp(X)| \leq \Big(\frac{B}{\text{min}(\alpha,\beta) }
  \Big)^{\frac{1}{p}}$ (see Appendix~\ref{var-mag-bound}).
  Thus,
  $b > \Big(\frac{B}{\text{min}(\alpha,\beta) } \Big)^{\frac{1}{p}}$
  implies $b> |\valp(X)|.$} The natural strategy to follow then is to
set the truncation parameter as an increasing function of the number
of data samples $n,$ which ensures that
\eqref{eq:cvar-ht-truncation_bound} holds for large enough $n.$
Moreover, it is clear from \eqref{eq:cvar-ht-bound} that for the
estimation error to (be guaranteed to) decay with $n,$ $b^2$ can grow
at most linearly in $n.$ Indeed, for our bandit algorithms, we set
$b = n^{q},$ where $q \in (0,1/2).$

Finally, it is tempting to set $b$ in a \emph{data-driven} manner,
i.e., to estimate the VaR, moment bounds and so on from the data, and
set $b$ large enough so that \eqref{eq:cvar-ht-truncation_bound} holds
with high probability. The issue however is that $b$ then becomes a
(data-dependent) random variable, and proving concentration results
with such data-dependent truncation is much harder.


\section{Risk-aware, distribution oblivious algorithms for MAB}
\label{sec:algorithms}

In this section, we formulate the problem of best arm identification
in a risk-aware fashion, propose algorithms, and prove performance
guarantees for these algorithms.

Consider a multi-armed bandit problem with $K$ arms, labeled
$1,2,\cdots,K.$ The loss (or cost) associated with arm~$i$ is
distributed as $X(i),$ where it is assumed that there exists $p > 1$
and $B < \infty$ such that $\Exp{|X(i)|^p} < B$ for all
$i.$\footnote{We pose the problem as (risk-aware) loss minimization,
  which is of course equivalent to (risk-aware) reward maximization.}
Each time an arm~$i$ is pulled, an independent sample distributed as
$X(i)$ is observed. Given a fixed budget of $T$ arm pulls in total,
our goal is to identify the arm that minimizes
$\xi_{1} \Exp{X(i)} + \xi_{2}\calp(X(i)),$ where $\xi_1$ and $\xi_2$
are positive (and given) weights. $(\xi_1,\xi_2) = (1,0)$ corresponds
to the classical mean minimization problem
\citep{audibert2010,yu2018}, whereas $(\xi_1,\xi_2) = (0,1)$
corresponds to a pure CVaR minimization
\citep{galichet2013,kolla2019b}. Optimization of a linear combination
of the mean and CVaR has been considered before in the context of
portfolio optimization in the finance community, but not, to the best
of our knowledge, in the MAB framework. The performance metric we
consider is the probability of incorrect arm identification. For
simplicity, we assume that the distributions of the arms satisfy
condition \textbf{C1}.

We also assume that there is a unique optimal arm. This is purely for
simplicity in expressing our performance guarantees; it is
straighforward to extend these to the setting where there are multiple
optimal arms. Let the ordered suboptimality gaps for the metric
$\xi_{1}\Exp{X(\cdot)} + \xi_{2}\calp(X(\cdot))$ be denoted by
$\Delta[2],\cdots,\Delta[K];$ here,
$0<\Delta[2] \leq \cdots \leq \Delta[K]$.

Finally, recall that we consider an entirely distribution oblivious
environment. In other words, the algorithm does not have any prior
information about the arm distributions, including the values of $p$
and $B.$ This is in contrast with the most of the literature on MAB
problems, where information about the support of the arm
distributions, bounds on their moments and/or sub-optimality gaps are
baked into the algorithms.\footnote{While the algorithms we propose
  are distribution oblivious, their performance guarantees will of
  course depend on the arm distributions.}

\subsection{Algorithms}

We estimate the performance of each arm as follows. Suppose that
arm~$i$ has been pulled $n$ times, and we observe samples
$X_1^i,X_2^i,\cdots,X_n^i.$ We use the following truncated empirical
estimator (see \cite{bickel1965,bubeck2013}) for the mean value
associated with the arm:
\begin{equation*}
  \tea_{n}(i) := \frac{\sum_{j = 1}^{n} X_{j}^i \Ind{|X_{j}^i| \leq b_m(n)} }{n},
\end{equation*} 
where $b_m(n) = n^{q_m}$ for $q_m \in (0,1).$ Note that we are growing
the truncation parameter $b_m$ sub-linearly in $n.$
Our estimator for the CVaR associated with arm~$i$ is the one
developed in Section~\ref{sec:cvar_concentration}, i.e.,
\begin{equation*}
  \cvargen = \hat{c}_{n,\alpha}^{(b_c(n))},
\end{equation*}
where $b_c(n) = n^{q_c}$ for $q_c \in (0,1/2).$

Our algorithms are of \emph{successive rejects} type
\citep{audibert2010}. They are parameterized by non-negative integers
$n_1 \leq n_2 \leq \cdots \leq n_{K-1}$ satisfying
$\sum_{i=1}^{K-2} n_i + 2 n_{K-1} \leq T.$ The algorithm proceeds in
$K-1$ phases, with one arm being rejected from further consideration
at the end of each phase. In phase~$i,$ the $K-1+i$ arms under
consideration are pulled $n_i-n_{i-1}$ times, after which the arm with
the worst (estimated) performance is rejected. This is formally
expressed in Algorithm~\ref{alg:gsr}.
The classical successive rejects algorithm in \cite{audibert2010} used
$n_k \propto \frac{T-K}{K+1-k}.$ Another special case is \emph{uniform
  exploration,} where $n_1 = n_2 = \cdots n_{K-1} = \floor{T/K}.$ As
the name suggests, under uniform exploration, all arms are pulled an
equal number of times, after which the arm with the best estimate is
selected.

\begin{algorithm}
  \caption{Generalized successive rejects algorithm}
  \label{alg:gsr}
\begin{algorithmic}
  \Procedure{GSR}{$T, K, \{n_{1},\cdots,n_{K-1}\}$}
  \State $A_{1} \gets \{1,\cdots,K\}$
  \State $n_{0} \gets 0$
  \For{$k = 1 \text{ to }K-1$}
  \State $\text{For each } i \in A_{k}\text{, select arm } i \text{ for } n_{k}-n_{k-1} \text{ rounds.}$
  \State $\text{Let } A_{k+1} = A_{k} \setminus \text{arg max}_{i \in A_{k}} \xi_{1} \meantea_{n_{k}}(i) + \xi_{2} \calptk(i)$
  \EndFor
  \State $\text{Output unique element of } A_{K}$
  \EndProcedure
\end{algorithmic}
\end{algorithm} 

\subsection{Performance evaluation}

We now state upper bounds on the probability of incorrect arm
identification under the successive rejects and uniform exploration
algorithms. However, our bounding techniques easily extend to the
complete class of generalized successive rejects algorithms described
in Algorithm~\ref{alg:gsr}.

\begin{theorem}
  \label{ue-prob-of-error}
  Suppose that the arm distributions satisfy the conditions
  \textbf{C1} and \textbf{C2}.
  
  Under the uniform exploration algorithm, the probability of
  incorrect arm identification $p_e$ is bounded as
  \begin{equation*}
    p_{e} \leq 2K\text{exp}\Big(-(T/K)^{1-q_m}\frac{\Delta[2]}{16\xi_{1}}\Big) 
    + 6K \text{exp} \Big(-(T/K)^{1-2q_c} \frac{\beta \Delta[2]^{2}}{768 \xi_{2}^{2}}\Big) 
  \end{equation*}
  for $T> Kn^{*},$ where
  \begin{equation*}
    n^{*} = \max \bigg(\Big(\frac{12\xi_{1}B}{\Delta[2]}\Big)^{\frac{1}{q_m\min(p-1,1)}}, 
    \Big(\frac{8\xi_{2}B}{\beta\Delta[2]}\Big)^{\frac{1}{q_c(p-1)}}, 
    \Big(\frac{B}{\text{min}(\alpha,\beta) } \Big)^{\frac{1}{q_c p}},
    \Big(\frac{\Delta[2]}{8\xi_{2}} \Big)^{\frac{1}{q_c}} \bigg). 
  \end{equation*}
\end{theorem}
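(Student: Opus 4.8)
The plan is to reduce $p_e$ to a union bound over the arms by conditioning on two ``good events'' --- one controlling all $K$ mean estimates simultaneously and one controlling all $K$ CVaR estimates simultaneously --- and then to bound the probabilities of their complements using, respectively, the classical concentration inequality for the truncated empirical mean and Theorem~\ref{cvar-ht-theorem}. \emph{Setup:} assume without loss of generality that arm~$1$ is the (unique) optimal arm, write $L(j) := \xi_1\Exp{X(j)} + \xi_2\calp(X(j))$ and $\hat L_n(j) := \xi_1 \tea_n(j) + \xi_2 \cvargen(j)$ for its true and estimated merit, and observe that under uniform exploration every arm is pulled exactly $n := \floor{T/K}$ times; since no fresh samples are drawn in the later (degenerate) phases, all estimates are formed from these $n$ i.i.d.\ samples and the algorithm simply outputs $\argmin_{j}\hat L_n(j)$. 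Hence an incorrect identification occurs only if $\hat L_n(i)\le \hat L_n(1)$ for some suboptimal $i$.

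\emph{Good events and reduction.} Introduce $H := \{\,|\tea_n(j)-\Exp{X(j)}| < \Delta[2]/(4\xi_1)\text{ for all }j\,\}$ and $G := \{\,|\cvargen(j)-\calp(X(j))| < \Delta[2]/(4\xi_2)\text{ for all }j\,\}$. On $H\cap G$ the triangle inequality gives $|\hat L_n(j)-L(j)| < \Delta[2]/2$ for every $j$, and since $\Delta[2]$ is the smallest suboptimality gap this forces $\hat L_n(1) < \hat L_n(i)$ for every $i\ne 1$; thus arm~$1$ is never the rejected ($\argmax$) arm in any phase, so it survives and is output. Therefore $\{\text{error}\}\subseteq H^c\cup G^c$, and a further union bound over the $K$ arms gives $p_e \le \sum_{j=1}^K \prob{|\tea_n(j)-\Exp{X(j)}| \ge \Delta[2]/(4\xi_1)} + \sum_{j=1}^K \prob{|\cvargen(j)-\calp(X(j))| \ge \Delta[2]/(4\xi_2)}$.

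\emph{Bounding the two sums.} For the CVaR terms I would apply Theorem~\ref{cvar-ht-theorem} to each arm with sample size $n$, truncation level $b=b_c(n)=n^{q_c}$ and $\Delta = \Delta[2]/(4\xi_2)$; the hypothesis $T>Kn^{*}$ guarantees $n$ is large enough to verify \eqref{eq:cvar-ht-truncation_bound}, whose three requirements $n^{q_c}>\Delta[2]/(8\xi_2)$, $n^{q_c}>|\valp(X(j))|$ (using the a priori bound $|\valp(X(j))|\le (B/\min(\alpha,\beta))^{1/p}$ from the footnote), and $n^{q_c}>(8\xi_2 B/(\beta\Delta[2]))^{1/(p-1)}$ correspond exactly to the second, third and fourth terms of $n^{*}$. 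Plugging $b=n^{q_c}$ and $\Delta=\Delta[2]/(4\xi_2)$ into \eqref{eq:cvar-ht-bound} yields $\prob{|\cvargen(j)-\calp(X(j))| \ge \Delta[2]/(4\xi_2)} \le 6\exp(-n^{1-2q_c}\beta\Delta[2]^2/(768\xi_2^2))$, and summing over $j$ gives the second term of the claimed bound. For the mean terms I would use the classical analysis of the truncated empirical mean (\cite{bickel1965,bubeck2013}): the bias obeys $|\Exp{\tea_n(j)}-\Exp{X(j)}| \le B/b_m(n)^{p-1} = B n^{-q_m(p-1)}$, which is below $\Delta[2]/(8\xi_1)$ once $n$ exceeds the first term of $n^{*}$, while Bernstein's inequality applied to the truncated samples (which lie in $[-n^{q_m},n^{q_m}]$ and whose variance grows at most like $B\max(1,n^{q_m(2-p)})$) controls the remaining fluctuation; requiring that this variance term not dominate the linear term in Bernstein's denominator again reduces to the first term of $n^{*}$ (the exponent $1/(q_m\min(p-1,1))$ encoding the cases $p<2$ and $p\ge2$). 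This gives $\prob{|\tea_n(j)-\Exp{X(j)}|\ge \Delta[2]/(4\xi_1)} \le 2\exp(-n^{1-q_m}\Delta[2]/(16\xi_1))$, and summing over $j$ and recalling $n=\floor{T/K}$ completes the argument.

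\emph{Main obstacle.} The delicate step is the truncated-mean concentration: one must split the tolerance $\Delta[2]/(4\xi_1)$ between bias and fluctuation and then check, separately in the regime $p<2$ (variance growing like $Bn^{q_m(2-p)}$) and $p\ge2$ (variance $O(B)$), that the growing truncation level $n^{q_m}$ is simultaneously large enough to render the bias negligible and to keep the Bernstein variance term from overwhelming its linear term --- it is exactly this pair of requirements that produces the first component of $n^{*}$ and the $\min(p-1,1)$ in its exponent. The CVaR half is comparatively routine, being a direct invocation of Theorem~\ref{cvar-ht-theorem}; its only subtlety is translating condition \eqref{eq:cvar-ht-truncation_bound} into the remaining three terms of $n^{*}$ via the a priori bound on $|\valp(X(j))|$.
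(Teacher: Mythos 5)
Your proposal follows essentially the same route as the paper's proof: a union bound reducing the error to per-arm deviation events of size $\Delta[2]/(4\xi_1)$ and $\Delta[2]/(4\xi_2)$, the CVaR terms bounded by invoking Theorem~\ref{cvar-ht-theorem} with $b=n^{q_c}$ together with $|\valp(X)|\le (B/\min(\alpha,\beta))^{1/p}$ (exactly the paper's Lemma~\ref{obl-cvar-min-bounds}), and the mean terms by a Bernstein-type analysis of the truncated empirical mean with $b=n^{q_m}$ (the paper's Lemma~\ref{obl-mean-min-bounds}, adapted from \cite{yu2018}). The only difference is constant bookkeeping in the mean part, where you assert rather than verify that your particular bias/fluctuation split yields the stated $16$, $12$ and $\min(p-1,1)$; this affects constants only, not the structure or validity of the argument.
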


The proof of Theorem~\ref{ue-prob-of-error} can be found in
Appendix~\ref{gsr-theorem-proof}. Here, we highlight the main takeaways from this
result.

First, note that the probability of error (incorrect arm
identification) decays to zero as $T \ra \infty.$ However, \emph{the
  decay is slower than exponential in $T;$} taking $q_m = q,$
$q_c = q/2$ for $q \in (0,1),$ the probability of error is
$O(\mathrm{exp}(-\gamma T^{1-q}))$ for a positive constant $\gamma.$
This slower-than-exponential bound is a consequence of the
distribution obliviousness of the algorithm. In technical terms, this
results from having to set the truncation parameters $b_m$ and $b_c$
for each arm as increasing functions of the horizon $T.$
Indeed, as we show in Section~\ref{sec:nonoblivious}, if $B,$ $p,$ and
$\Delta[2]$ are known to the algorithm (as is often assumed in the
literature), then it is possible to achieve an exponential decay of
the probability of error with $T;$ in this case, it is possible to
simply set the truncation parameters as static constants (that do not
depend on~$T$).

Second, our upper bounds only hold when $T$ is larger than a certain
threshold. This is again a consequence of distribution
obliviousness---the concentration inequalities on our truncated
estimators are only valid when the truncation interval is wide
enough. This is required in order to limit the bias of these
estimators. As a consequence, our performance guarantees only kick in
once the horizon length is large enough to ensure that this condition
is met. As expected, in the non-oblivious setting, this limitation
does not arise, since the truncation parameters can be statically set
to be large enough to limit the bias (see
Section~\ref{sec:nonoblivious}).

Third, there is a natural tension between the bound for the
probability of error and the threshold on $T$ beyond which they are
applicable, with respect to the choice of truncation parameters $q_m$
and $q_c.$ In particular, the the upper bound on $p_e$ decays fastest
with respect to $T$ when $q_m,q_c \approx 0.$ However, choosing
$q_m,q_c$ to be small would make the threshold on the horizon to be
large, since the bias of our estimators would decay slower with
respect to $T.$ Intuitively, smaller values of $q_m,q_c$ limit the
variance of our estimators (which is reflected in the bound for $p_e$)
at the expense of a greater bias (which is reflected in the threshold
on $T$), whereas larger values $q_m,q_c$ limit the bias at the expense
of increased variance. We comment on the best choice of these
parameters as suggested by numerical experimentation in
Section~\ref{sec:experiments}.

Finally, we note that the bound on the probability of error in
Theorem~\ref{sr-prob-of-error} is stronger than the power law bound
corresponding to the distribution oblivious algorithm for the mean
metric analysed in \cite{yu2018}. The latter uses the standard
(non-truncated) empirical mean estimator, which has weaker
concentration properties compared to the truncated empirical mean
estimator used here.

Next, we consider the successive rejects algorithm. Let
$\overline{\log}(K) := 1/2 + \sum_{i=2}^{K} 1/i$.

\begin{theorem}
    \label{sr-prob-of-error}
    Let the arms satisfy the conditions \textbf{C1} and
    \textbf{C2}.
    The probability of incorrect arm identification for the successive
    rejects algorithm is bounded as follows.
    \begin{align*}
      p_{e} &\leq \sum_{i=2}^{K} (K+1-i) 2\text{exp}\bigg(-\frac{1}{16\xi_{1}}\Big(\frac{T-K}{\overline{\log}(K)}\Big)^{1-q_m} \frac{\Delta[i]}{i^{1-q_m}}\bigg) \\
            & \qquad +\sum_{i=2}^{K} (K+1-i) 6\text{exp} \bigg(-\frac{\beta}{768 \xi_{2}^{2}}\Big(\frac{T-K}{\overline{\log}(K)}\Big)^{1-2q_c} \frac{\Delta[i]^{2}}{i^{1-2q_c}} \bigg)
\end{align*}
for $T> K+K\overline{\log}(K)n^{*},$ where
\begin{equation*}
    n^{*} = \max \bigg(\Big(\frac{12\xi_{1}B}{\Delta[2]}\Big)^{\frac{1}{q_m\min(p-1,1)}}, 
    \Big(\frac{8\xi_{2}B}{\beta\Delta[2]}\Big)^{\frac{1}{q_c(p-1)}}, 
    \Big(\frac{B}{\text{min}(\alpha,\beta) } \Big)^{\frac{1}{q_c p}},
    \Big(\frac{\Delta[2]}{8\xi_{2}} \Big)^{\frac{1}{q_c}} \bigg).   
  \end{equation*}
\end{theorem}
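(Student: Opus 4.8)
\textbf{Proof sketch for Theorem~\ref{sr-prob-of-error}.} The plan is to overlay the classical successive-rejects error analysis of \cite{audibert2010} on the truncated-estimator concentration bounds used to prove Theorem~\ref{ue-prob-of-error}. Write $L(i) := \xi_{1}\Exp{X(i)} + \xi_{2}\calp(X(i))$ for the true objective of arm $i$ and $\hat{L}_{n}(i) := \xi_{1}\tea_{n}(i) + \xi_{2}\hat{c}^{\dagger}_{n,\alpha}(i)$ for its estimate after $n$ pulls, and relabel the arms as $(1),\dots,(K)$ with $L((1)) < L((2)) \leq \cdots \leq L((K))$, so that $(1)$ is optimal and $L((i)) - L((1)) = \Delta[i]$. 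Recall from Algorithm~\ref{alg:gsr} that in phase $k$ ($k = 1,\dots,K-1$) the surviving set $A_{k}$ has $|A_{k}| = K+1-k$, each of its arms is pulled a total of $n_{k} \geq \frac{T-K}{\overline{\log}(K)(K+1-k)}$ times, and then $\argmax_{i\in A_{k}} \hat{L}_{n_{k}}(i)$ is ejected.

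First I would set up the error decomposition. If the algorithm errs, arm $(1)$ is ejected at the end of some phase $k$, so $\hat{L}_{n_{k}}((1)) \geq \hat{L}_{n_{k}}(j)$ for every $j \in A_{k}$. A pigeonhole argument shows $A_{k}$ must contain a suboptimal arm of rank at least $K+1-k$; more generally, the only arms that can play this role in phase $k$ (the ``candidate witnesses'') are those of rank $i \geq K+1-k$, of which there are $k$. For such a witness $(i)$ we have $\hat{L}_{n_{k}}((1)) \geq \hat{L}_{n_{k}}((i))$ while $L((i)) - L((1)) = \Delta[i] \geq \Delta[K+1-k]$, so the crossing forces a deviation, by at least $\Delta[K+1-k]/2$, of $\hat{L}_{n_{k}}$ from $L$ at one of the two arms; splitting further between the mean and CVaR components, one of $\xi_{1}|\tea_{n_{k}}(a) - \Exp{X(a)}| \geq \Delta[K+1-k]/4$ or $\xi_{2}|\hat{c}^{\dagger}_{n_{k},\alpha}(a) - \calp(X(a))| \geq \Delta[K+1-k]/4$ holds for some $a \in \{(1),(i)\}$. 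Union-bounding over the (at most) $k$ candidate witnesses in phase $k$ and over the $K-1$ phases thus reduces $p_{e}$ to a sum of single-estimator tail probabilities, with the phase-$k$ group evaluated at $n_{k}$ pulls using the conservative gap $\Delta[K+1-k]$.

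Next I would bound each tail probability. For the CVaR terms I apply Theorem~\ref{cvar-ht-theorem} with $\Delta \leftarrow \Delta[K+1-k]/(4\xi_{2})$, $b \leftarrow b_{c}(n_{k}) = n_{k}^{q_{c}}$ and $n \leftarrow n_{k}$, which yields a bound of the form $6\,\mathrm{exp}\!\left(-n_{k}^{1-2q_{c}}\,\beta\,\Delta[K+1-k]^{2}/(768\,\xi_{2}^{2})\right)$ provided $n_{k}^{q_{c}}$ exceeds $\max\!\left(\Delta[K+1-k]/(8\xi_{2}),\; |\valp(X(a))|,\; (8\xi_{2}B/(\beta\Delta[K+1-k]))^{1/(p-1)}\right)$; since $\Delta[K+1-k] \geq \Delta[2]$ and, uniformly over arms, $|\valp(X(a))| \leq (B/\min(\alpha,\beta))^{1/p}$ (Appendix~\ref{var-mag-bound}), this is implied by $n_{k} > n^{*}$. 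For the mean terms I use the truncated-mean concentration estimate underlying Theorem~\ref{ue-prob-of-error}, of the form $\prob{|\tea_{n}(a) - \Exp{X(a)}| \geq \delta} \leq 2\,\mathrm{exp}(-n^{1-q_{m}}\delta/4)$ once $n$ is large enough that the truncation bias is below $\delta/2$ (which here reduces to $n > (12\xi_{1}B/\Delta[2])^{1/(q_{m}\min(p-1,1))}$, again covered by $n > n^{*}$), applied with $\delta \leftarrow \Delta[K+1-k]/(4\xi_{1})$. Because both bounds are monotone decreasing in the number of pulls, I then substitute $n_{k} \geq \frac{T-K}{\overline{\log}(K)(K+1-k)}$; re-indexing by $i := K+1-k$ turns the per-phase factor $k$ into $K+1-i$, turns $\Delta[K+1-k]$ into $\Delta[i]$, and makes the exponents $\big(\frac{T-K}{\overline{\log}(K)}\big)^{1-q_{m}}\Delta[i]\,i^{-(1-q_{m})}$ and $\big(\frac{T-K}{\overline{\log}(K)}\big)^{1-2q_{c}}\Delta[i]^{2}\,i^{-(1-2q_{c})}$, which is exactly the displayed double sum. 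Requiring even the shortest phase $n_{1} \geq \frac{T-K}{\overline{\log}(K)K}$ to exceed $n^{*}$ gives the stated threshold $T > K + K\,\overline{\log}(K)\,n^{*}$.

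I expect the main obstacle to be the one already present in Theorem~\ref{ue-prob-of-error}: keeping the bias of the truncated mean and CVaR estimators under control so that the concentration inequalities apply at all. The delicate point is to verify that the truncation-width hypotheses of Theorem~\ref{cvar-ht-theorem} (and of the mean bound) hold simultaneously for every arm and in every phase that can contribute to the error event --- this is precisely what forces the conservative use of the smallest gap $\Delta[2]$ in $n^{*}$ and the requirement that even the first, shortest phase already provide more than $n^{*}$ pulls. Once the single-phase, single-arm deviation estimate is in hand, the combinatorial bookkeeping of the successive-rejects decomposition (identifying the candidate witnesses of each phase and pairing each phase with the pull count $n_{k}$ that controls its tails) is routine and parallels \cite{audibert2010}, with the truncated-estimator tail bounds replacing the sub-Gaussian bounds used there.
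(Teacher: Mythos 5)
Your proposal follows essentially the same route as the paper's proof: the Audibert--Bubeck successive-rejects decomposition with a union bound over phases and the surviving worst arms, a split of each crossing event into mean and CVaR deviations of size $\Delta[\cdot]/4$, the truncated-mean concentration lemma and the truncated-CVaR bound (Theorem~\ref{cvar-ht-theorem} together with the VaR-magnitude bound of Appendix~\ref{var-mag-bound}) with $b=n^{q}$, substitution of $n_{k}\geq \frac{T-K}{\overline{\log}(K)(K+1-k)}$, and the threshold on $T$ obtained by forcing $n_{1}>n^{*}$. The one wrinkle --- your claim that $\Delta[K+1-k]\geq\Delta[2]$ lets $n_{k}>n^{*}$ imply the condition $n_{k}^{q_{c}}>\Delta[K+1-k]/(8\xi_{2})$ actually points the wrong way for larger gaps --- is a bookkeeping subtlety already present in the paper's own choice of $n^{*}$, so it does not distinguish your argument from theirs.
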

Structurally, our results for the successive rejects algorithm are
similar to those for uniform exploration. Indeed, taking $q_m = q,$
$q_c = q/2$ for $q \in (0,1),$ the probability of error remains
$O(\mathrm{exp}(-\gamma T^{1-q}))$ for a different positive constant
$\gamma.$ So our conclusions from Theorem~\ref{ue-prob-of-error},
including the bias-variance tradeoff in setting the truncation
parameters $q_m$ and $q_c,$ apply to Theorem~\ref{sr-prob-of-error} as
well. 
Intuitively, one would expect the successive rejects algorithm
to perform better when the arms are well separated, whereas uniform
exploration would work well when all sub-optimal arms are nearly
identically separated from the optimal arm.

\subsection{The non-oblivious setting}
\label{sec:nonoblivious}

Finally, we consider the non-oblivious setting, where the algorithm
knows $p,$ $B$ and $\Delta[2]$ (or a lower bound on $\Delta[2]$). This
is the setting that is effectively considered in the bulk of the
literature on MAB algorithms. In this case, we show that it is
possible to set the algorithm parameters (specifically, the truncation
parameters) so that we achieve an exponential decay of the probability
of error with~$T.$ Moreover, unlike our results for the distribution
oblivious case, there is no lower bound on $T$ beyond which the bounds
on the probability of error apply.

In particular, we set truncation threshold for the mean estimator as
$b_m = \Big(\frac{12B\xi_{1}}{\Delta[2]}\Big)^{\frac{1}{\min(1,p-1)}}$
and the truncation threshold for the CVaR estimator as
$b_c = \max
\bigg(\Big(\frac{8\xi_{2}B}{\beta\Delta[2]}\Big)^{\frac{1}{p-1}},
\Big(\frac{B}{\text{min}(\alpha,\beta) }
\Big)^{\frac{1}{p}}\bigg)$. It can be shown that this would ensure an
exponentially decaying (in $T$) probability of error for uniform
exploration as well as successive rejects (see
Appendix~\ref{nonoblivious-proofs}).

In conclusion, the results in this section show that one can indeed
devise algorithms for risk-aware best arm identification in an
entirely distribution oblivious manner. However, the performance
guarantees we obtain are not as strong as those that can be obtained
for non-oblivious algorithms; this is of course what one would
expect. In the next section, we evaluate the performance gap between
oblivious and non-oblivious algorithms via numerical experiments.


\section{Numerical Experiments}
\label{sec:experiments}

In this section, we evaluate the performance of the proposed
algorithms via simulations, by making the comparison with (more
conventional) non-oblivious algorithms. Due to space constraints, we
restrict ourselves to successive rejects (SR) algorithms, and two
specific objectives: (i) mean minimization, i.e.,
$(\xi_1,\xi_2) = (0,1),$ and (ii) CVaR minimization, i.e.,
$(\xi_1,\xi_2) = (1,0).$ In each of the experiment below, the
probability of error is computed by averaging over 50000 runs at each
sampled $T$.

We consider the following MAB problem instance: There are 10 arms, the
first having mean loss 0.9, and the remaining having mean loss 1. The
first 5 arms have a (heavy-tailed) Pareto loss distribution with shape
parameter 3, and the last 5 arms have an exponential loss
distribution. The confidence level $\alpha$ is set to 0.95. In this
case, Arm~1 is optimal for the mean as well as the CVaR metric.

For the mean minimization problem, our results are presented in
Figure~\ref{fig:mean_expt1}. We compare the probability of error for
the proposed distribution oblivious algorithm (taking $q_m = 0.75$)
with that corresponding to the non-oblivious truncation based SR
algorithm from \cite{yu2018}, which uses the information $p=2, B=2.0,$
and $\Delta[2] = 0.1.$ For the CVaR minimization problem, our results
are presented in Figure~\ref{fig:cvar_expt1}. Again, we compare the
error probability of the of the proposed oblivious SR algorithm with
that corresponding to a non-oblivious truncation based SR algorithm
with $b_m = (4B/(\Delta[2]\beta))^{1/(p-1)},$ where $p=2, B=2.0,$ and
the minimum CVaR gap $\Delta[2] = 0.25$ (this ensures an exponential
decay in $T$ of the error probability). Note that the performance of
the proposed oblivious algorithms is almost indistinguishable from
that of the non-oblivious counterparts.

\begin{figure}[htp]
    \centering
    \begin{subfigure}[b]{0.45\linewidth}
        \includegraphics[width=\linewidth]{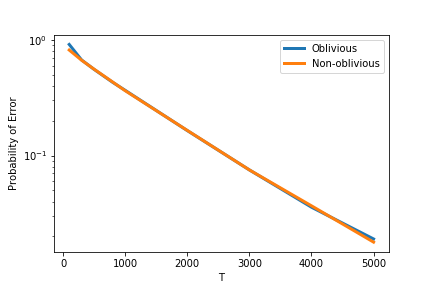}
        \caption{Mean Minimization}
        \label{fig:mean_expt1}
    \end{subfigure}
    \begin{subfigure}[b]{0.45\linewidth}
        \includegraphics[width=\linewidth]{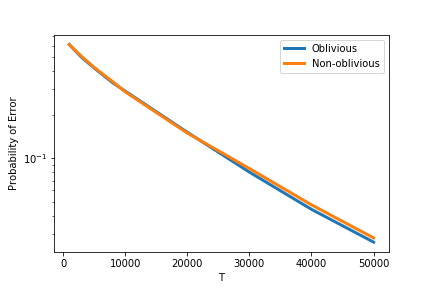}
        \caption{CVaR Minimization}
        \label{fig:cvar_expt1}
    \end{subfigure}
    \label{fig:performance}
    \caption{Performance comparison between oblivious and
      non-oblivious SR algorithms}
\end{figure}

Next, we illustrate an instance where there is a visible performance
hit associated with distribution obliviousness. Consider an MAB
problem with two arms for the mean minimization metric. The first arm
has a Pareto distribution with mean loss of 1.0 and shape parameter
1.9. The second arm is exponentially distributed with a mean loss of
0.9. While the second arm is optimal, our truncation induces a greater
bias (specifically, underestimation) in the mean estimate of the
(heavy-tailed) first arm compared to the second, resulting in poorer
performance when $q_m$ is small. To see this, we compare the
performance of the oblivious SR algorithm to the non-oblivious SR
algorithm of \cite{yu2018} for $q_m = 0.4,$ 0.5, and 0.7 (see
Figure~\ref{fig:projection_growth}). Note that when $q_m$ is small,
the truncation interval grows slowly with $T,$ and the resulting bias
gets reflected in poorer performance compared to the non-oblivious
algorithm. On the other hand, for $q_m = 0.7,$ the truncation interval
grows fast enough to make the performance indistinguishable from the
non-oblivious algorithm.

\begin{figure}[htp]
    \centering
    \begin{subfigure}[b]{0.30\linewidth}
        \includegraphics[width=\linewidth]{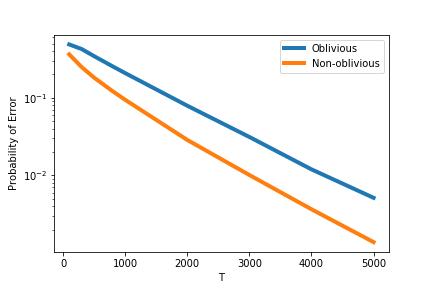}
        \caption{$b_{m} = n^{0.4}$ }
    \end{subfigure}
    \begin{subfigure}[b]{0.30\linewidth}
        \includegraphics[width=\linewidth]{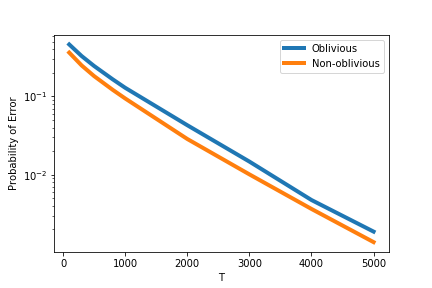}
        \caption{$b_{m} = n^{0.5}$}
    \end{subfigure}
    \begin{subfigure}[b]{0.30\linewidth}
        \includegraphics[width=\linewidth]{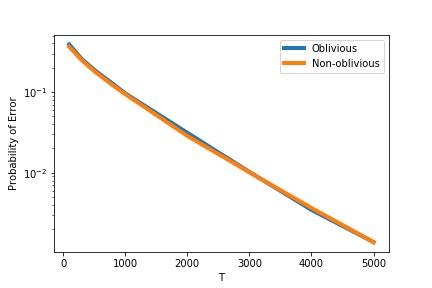}
        \caption{$b_{m} = n^{0.7}$}
    \end{subfigure}
    \caption{Performance of SR for mean minimization with different
      truncation interval growth rates}
    \label{fig:projection_growth}
  \end{figure}

\section{Concluding Remarks}
\label{sec:discuss}
In this paper, we consider the problem of risk-aware best arm
selection in a pure exploration MAB framework. A key feature of our
algorithms is {\it distribution obliviousness}; the algorithms have no
prior knowledge about the arm distributions. This is in contrast with
most algorithms in the literature for MAB problems, which assume prior
knowledge of the support, moment bounds, or bounds on the
sub-optimality gaps. The proposed algorithms come with analytical
performance guarantees, and also seem to perform well in practice.

This paper motivates future work along several directions. First, our
numerical experiments suggest that our upper bounds on the probability
of error for the distribution oblivious algorithms are rather
loose. Tigher performance bounds, which would in turn require tigher
concentration bounds for truncation-based estimators, are worth
exploring. More importantly, fundamental lower bounds on the
performance of any algorithm need to be devised for the distribution
oblivious setting. Currently available lower bounds (see
\cite{audibert2010}) on the error probability for best arm
identification do not take into account the information available to
the algorithm, and only capture risk-neutral arm selection. Finally,
it is also interesting to explore distribution oblivious algorithms in
the regret minimization framework, as well as the PAC framework.

\bibliography{references}

\newpage
\appendix

\section{CVaR Concentration for Bounded Random Variables (Proof of Theorem~\ref{cvar-bounded-theorem})}
\label{cvar-bounded-proof}
We state two concentration inequalities that will be used repeatedly
in the proof of Theorem~\ref{cvar-bounded-theorem}.

\textbf{Bernstein's Inequality}

Let $X_{i}$ be IID samples of a random variable $X$ with mean $\mu$. If $|X| \leq b$ almost surely, then for any $\epsilon > 0$,
\begin{equation*}
    \pbb\bigg(\bigg|\frac{\sum_{i=1}^{n}X_{i}}{n} - \mu\bigg| > \epsilon \bigg) \leq 2\text{exp}\bigg(-\frac{n \epsilon^{2}}{2\mathbb{E}[X^{2}]+2b\epsilon/3} \bigg)
\end{equation*}

\textbf{Chernoff Bound for Bernoulli Experiment}

Let $X_{1},...,X_{n}$ be independent Bernoulli experiments, $\pbb(X_{i} = 1) = p_{i}$. Set $X = \sum_{i=1}^{n}X_{i}$, $\mu = \mathbb{E}[X]$. Then for every $0<\delta<1$, 
\begin{align*}
	P(X \geq (1+\delta)\mu) \leq \text{exp}(-\frac{\mu\delta^{2}}{3}) \\
	P(X \leq (1-\delta)\mu) \leq \text{exp}(-\frac{\mu\delta^{2}}{2})
\end{align*}

Theorem~\ref{cvar-bounded-theorem} follows from the following
statement.  Let $X$ be any random variable with supp($X$)
$\subseteq [-b,b]$. Then for any $\varepsilon \geq 0$,
\begin{subequations}
\begin{align}
    &\pbb(\calpn(X) \leq \calp(X) - \varepsilon) \leq 3 \text{exp} \Big(-(1-\alpha) n \frac{(\varepsilon/b)^{2}}{9 + 1.6\varepsilon/b} \Big) \label{eq:cvar-bounded-theorem-a}\\
    &\pbb(\calpn(X) \geq \calp(X) + \varepsilon) \leq 3 \text{exp} \bigg(-n(1-\alpha) \frac{(\varepsilon/b)^{2}}{10 + 1.4\varepsilon/b} \bigg) \label{eq:cvar-bounded-theorem-b}
\end{align}
\end{subequations}

\subsection{Proof of \ref{eq:cvar-bounded-theorem-a}}
\label{cvar-bounded-proof-a}

We're going to use the following lemma from \cite{wang2010}.

\begin{lemma}
    Let $X_{[i]}$ be the decreasing order statistics of $X_{i}$; then $f(k) = \frac{1}{k} \sum_{i=1}^{k}X_{[i]}, ~1 \leq k \leq n$, is decreasing and the following two inequalities hold:
    \begin{subequations}
    \begin{align}
        \frac{1}{n\beta} \sum_{i=1}^{\fnb} X_{[i]} &\leq \calpn(X) \leq \frac{1}{n\beta} \sum_{i=1}^{\cnb} X_{[i]} \label{eq:wang-a} \\
        f(\cnb) &\leq \calpn(X) \leq f(\fnb) \label{eq:wang-b}
    \end{align}
    \end{subequations}  
\end{lemma}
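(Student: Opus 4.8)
The plan is to prove the two assertions in turn. The monotonicity of $f$ is elementary: write $f(k+1) = \frac{1}{k+1}\left(k f(k) + X_{[k+1]}\right)$ and note that, since the order statistics are non-increasing, $X_{[k+1]} \le X_{[i]}$ for every $i \le k$, so the incoming term $X_{[k+1]}$ is at most the running average $f(k)$; hence $f(k+1) \le \frac{1}{k+1}\left(k f(k) + f(k)\right) = f(k)$. Equivalently, $f(k+1) - f(k) = \frac{1}{k(k+1)}\sum_{i=1}^{k}\left(X_{[k+1]} - X_{[i]}\right) \le 0$, so $f$ is non-increasing on $\{1,\dots,n\}$; in particular $f(\cnb) \le f(\fnb)$.

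For the sandwich bounds, the key step is to rewrite the estimator as a normalized partial sum of order statistics. Because $\valpn(X) = X_{[\fnb]}$ and the sample is sorted in decreasing order, the indices contributing to $\calpn(X)$ --- namely those with $X_i \ge X_{[\fnb]}$ --- are exactly the top $N$ of them for some $N \ge \fnb$, with $N = \fnb$ whenever the value $X_{[\fnb]}$ is not repeated in the data, which holds almost surely under condition \textbf{C1}. In that generic case $\calpn(X) = \frac{1}{n\beta}\sum_{i=1}^{\fnb}X_{[i]} = \frac{\fnb}{n\beta} f(\fnb)$. Combining this identity with $\fnb \le n\beta \le \cnb$, with the monotonicity $f(\cnb) \le f(\fnb)$, and with an accounting of the boundary order statistic $X_{[\fnb+1]}$ (which governs the gap between $\sum_{i=1}^{\fnb}X_{[i]}$ and $\sum_{i=1}^{\cnb}X_{[i]}$), one reads off both \eqref{eq:wang-a} and \eqref{eq:wang-b}. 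The case with ties is then recovered either by keeping the repeated copies of $X_{[\fnb]}$ explicit in the count of $N$, or by a small-perturbation argument: jitter the sample by an independent uniform on $[-\eta,\eta]$, apply the continuous case, and let $\eta \downarrow 0$.

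The monotonicity of $f$ and the intuition that ``the average of the top $k$ samples decreases in $k$'' are immediate; the real work, and where I would expect the only genuine difficulty, is the boundary bookkeeping: pinning down $N$ relative to $\fnb$ and $\cnb$ when the empirical VaR level $X_{[\fnb]}$ is tied, excluding the degenerate regime $\fnb = 0$ (harmless in the bandit application, where $n$ is always large enough that $\fnb \ge 1$), and --- most delicately --- ensuring the comparison against $\frac{1}{n\beta}\sum_{i=1}^{\cnb}X_{[i]}$ points in the stated direction, which is where the interplay between the floor/ceiling quantile conventions and the $\frac{1}{n\beta}$ normalization has to be handled with care. I would organize the write-up so that \textbf{C1} removes ties almost surely and the $\fnb \le n\beta \le \cnb$ interpolation, together with the monotonicity of $f$, does the rest, following the accounting of Wang and Gao.
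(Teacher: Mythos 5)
Your monotonicity argument for $f$ is correct and complete, and your identity for the tie-free case, $\calpn(X)=\frac{1}{n\beta}\sum_{i=1}^{\fnb}X_{[i]}=\frac{\fnb}{n\beta}f(\fnb)$, is right for the paper's indicator-form definition of $\calpn$. The genuine gap is precisely the step you defer as ``boundary bookkeeping'': the claim that \eqref{eq:wang-a} and \eqref{eq:wang-b} can then be ``read off'' from $\fnb\le n\beta\le\cnb$ and the monotonicity of $f$. They cannot. With your own identity, and when $n\beta\notin\mathbb{Z}$ so that $\cnb=\fnb+1$, the upper bound in \eqref{eq:wang-b} is equivalent to $f(\fnb)\bigl(1-\fnb/(n\beta)\bigr)\ge 0$, i.e.\ to $f(\fnb)\ge 0$; the lower bound in \eqref{eq:wang-b} reads $f(\cnb)\le\frac{\fnb}{n\beta}f(\fnb)$, which is strictly stronger than $f(\cnb)\le f(\fnb)$ whenever $f(\fnb)>0$; and the upper bound in \eqref{eq:wang-a} is equivalent to $X_{[\cnb]}\ge 0$. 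None of these follows from floor/ceiling arithmetic plus monotonicity, and they are sign-sensitive: e.g.\ if all samples are negative (admissible in Theorem~\ref{cvar-bounded-theorem}, whose support is $[-b,b]$) the first fails, while if all samples are positive and nearly equal the second fails. So the comparisons you treat as routine are exactly where the statement lives, and your sketch does not establish either inequality.

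What makes the sandwich work is not tie-handling but the other representation of the empirical CVaR, $\valpn(X)+\frac{1}{n\beta}\sum_{i=1}^{n}\bigl(X_i-\valpn(X)\bigr)^{+}=\bigl(1-\frac{\fnb}{n\beta}\bigr)X_{[\fnb]}+\frac{\fnb}{n\beta}f(\fnb)$, i.e.\ a convex combination of $X_{[\fnb]}$ and $f(\fnb)$; from $X_{[\cnb]}\le X_{[\fnb]}\le f(\fnb)$ and a comparison of the weights against those in $f(\cnb)=\frac{\fnb}{\cnb}f(\fnb)+\frac{1}{\cnb}X_{[\cnb]}$ one obtains \eqref{eq:wang-b} cleanly, and the discrepancy between this form and the indicator form you use is exactly the boundary term your argument never controls (the analogous issue persists for the upper half of \eqref{eq:wang-a}, which still needs a sign or convention argument). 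Note also that the paper itself gives no proof to compare against: it imports this lemma verbatim from Wang and Gao (2010). So your proposal is not a variant of the paper's argument with details omitted; it is missing the one nontrivial ingredient, and as written its concluding ``reads off'' step would fail in the generality (signed samples, $n\beta\notin\mathbb{Z}$) that the surrounding theorem requires.
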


\textbf{1. $\varepsilon > 2b$}
\begin{equation*}
    \pbb(\calpn(X) \leq \calp(X) - \varepsilon) = 0
\end{equation*}
Both $\calp(X) \in [-b,b]$ and $\calpn(X) \in [-b,b]$. Therefore, the difference can't be larger than $2b$.

\vspace{5mm}
\textbf{2. $\varepsilon \in [0,2b]$}

We'll condition the probability above on a random variable $\knb$ which is defined as $\knb$ = max$\{i: X_{[i]} \in [\valp(X), b]\}$. Note that $\valp(X)$ is a constant such that the probability of a $X$ being greater than $\valp(X)$ is $\beta$. Also observe that $\pbb(\knb = k) = \pbb(k$ from $\{X_{i}\}_{i=1}^{n}$ have values in $[\valp(X), b])$. Using the above two statements one can easily see that $\knb$ follows a binomial distribution with parameters $n$ and $\beta$.

Consider $k$ I.I.D. random variables $\{\Tilde{X}_{i}\}_{i=1}^{k}$ which are distributed according to $\pbb(X \in \cdot ~| X \in [\valp(X), b])$.  By conditioning on $\knb = k$, one can observe using symmetry that $\frac{1}{k}\sum_{i=1}^{k} X_{[i]}$ and $\frac{1}{k}\sum_{i=1}^{k}\Tilde{X}_{i}$ have the same distribution. We'll next bound the probability  $\pbb(\calpn(X) \leq \calp(X) - \varepsilon | \knb = k)$ for different values of $k$. Now,
\begin{align*}
    \pbb(\calpn(X) \leq \calp(X) - \varepsilon) &= \sum_{k=0}^{n}\pbb(\knb = k)\pbb(A) \\
    & \leq \underbrace{\sum_{k=0}^{\fnb}\pbb(\knb = k)\pbb(A)}_{I_{2}}  + \underbrace{\sum_{k=\cnb}^{n}\pbb(\knb = k)\pbb(A)}_{I_{1}}
\end{align*}
where $\pbb(A) = \pbb(\calpn(X) \leq \calp(X) - \varepsilon | \knb = k)$. 

\vspace{3mm}
\textbf{Bounding $I_{1}$}

Note that $k \geq \cnb$. We'll begin by bounding $P(A)$.
\begin{align*}
    &\pbb(\calpn(X) \leq \calp(X) - \varepsilon | \knb = k) \\
    \leq~ &\pbb \bigg( \frac{1}{\cnb} \sum_{i=1}^{\cnb} X_{[i]}\leq \calp(X) - \varepsilon | \knb = k \bigg) ~(\text{using \ref{eq:wang-b}}) \\
    \leq~ &\pbb \bigg(\frac{1}{k} \sum_{i=1}^{k} X_{[i]} \leq \calp(X) - \varepsilon | \knb = k \bigg) ~(\because f(\cdot) \text{ is decreasing}) \\
     =~ &\pbb \bigg(\frac{1}{k} \sum_{i=1}^{k} \Tilde{X}_{i} \leq \calp(X) - \varepsilon \bigg) \\
    \leq~ &\text{exp}\bigg(-\frac{k\varepsilon^{2}}{2\mathbb{E}[\Tilde{X}^{2}] + 2b\varepsilon/3} \bigg) ~(\text{using Bernstein's inequality})
\end{align*}
Supp$(\Tilde{X}) \in [\valp(X), b]$. In worst case, $\valp(X) = -b$. Therefore, Supp$(\Tilde{X}) \in [-b, b]$ and $\mathbb{E}[\Tilde{X}^{2}] \leq b^{2}$.  Hence,
\begin{equation*}
    \pbb(\calpn(X) \leq \calp(X) - \varepsilon | \knb = k) \leq \text{exp}\bigg(-\frac{k\varepsilon^{2}}{2b^{2} + 2b\varepsilon/3} \bigg)
\end{equation*}
Hence, we have the following:
\begin{align*}
    I_{1} &= \sum_{k=\cnb}^{n} \binom{n}{k} \beta^{k} (1-\beta)^{n-k}\pbb(\calpn(X) \leq \calp(X) - \varepsilon | \knb = k) \nonumber \\
    &\leq \sum_{k=\cnb}^{n} \binom{n}{k} \bigg( \beta\text{exp}\Big(-\frac{\varepsilon^{2}}{2b^{2} + 2b\varepsilon/3} \Big)\bigg)^{k}(1-\beta)^{n-k} \nonumber\\
    &\leq \bigg(1-\beta+\beta\text{exp}\Big(-\frac{\varepsilon^{2}}{2b^{2} + 2b\varepsilon/3}\Big) \bigg)^{n} \nonumber \\
    &\leq \text{exp}\Bigg(-\beta n \bigg( 1 - \text{exp}\Big(-\frac{\varepsilon^{2}}{2b^{2} + 2b\varepsilon/3}\Big) \bigg)  \Bigg) ~(\because e^{x} \geq 1+x ~\forall x \in \mathbb{R}) 
\end{align*}

Now, let's bound $1 - \text{exp}\Big(-\frac{\varepsilon^{2}}{2b^{2} + 2b\varepsilon/3}\Big)$. We know that $1 - e^{-x} \geq x - x^{2}/2 = x(1 - x/2)$. One can easily verify that $\frac{\varepsilon^{2}}{2b^{2} + 2b\varepsilon/3}$ is an increasing function of $\varepsilon$ if $\varepsilon \geq 0$. Putting $\varepsilon = 2b$, we get, $1 - \frac{1}{2}\frac{\varepsilon^{2}}{2b^{2} + 2b\varepsilon/3} \geq 1 - \frac{3}{5} = \frac{2}{5}$. Hence, $1 - \text{exp}\Big(-\frac{\varepsilon^{2}}{2b^{2} + 2b\varepsilon/3}\Big) \geq \frac{2}{5}\frac{\varepsilon^{2}}{2b^{2} + 2b\varepsilon/3}$. Therefore,
\begin{align*}
    I_{1} \leq \text{exp}\bigg(-\beta n \Big(\frac{(\varepsilon/b)^{2}}{5 + 5\varepsilon/(3b)} \Big) \bigg)
\end{align*}
\vspace{3mm}

\textbf{Bounding $I_{2}$} 

Note that $k \leq \fnb$. We'll again start by bounding $\pbb(A)$.

\begin{align*}
    \pbb(\calpn(X) \leq \calp(X) - \varepsilon | \knb = k) &\leq \pbb \Big( \frac{1}{n\beta} \sum_{i=1}^{\fnb} X_{[i]} \leq \calp(X) - \varepsilon \Big| \knb = k \Big) ~(\text{Using \ref{eq:wang-a}}) \\
    & \leq \pbb \Big( \frac{1}{k} \sum_{i=1}^{k} X_{[i]} \leq \frac{n\beta}{k}(\calp(X) - \varepsilon) \Big| \knb = k \Big) ~(\because~k\leq \fnb) \\
    & \leq \pbb \bigg( \frac{1}{k} \sum_{i=1}^{k} X_{[i]} \leq \calp(X) + \Big(\frac{n\beta}{k} - 1\Big)b - \frac{n\beta\varepsilon}{k} \Big| \knb = k \bigg) ~(\because \calp(X) \leq b)
\end{align*}

\textbf{Case 1} $\varepsilon \in [b,2b]$

Let $\varepsilon_{1}(k) = \frac{n\beta\varepsilon}{k} + \Big(1 - \frac{n\beta}{k}\Big)b = b \bigg(1 + \Big(\frac{\varepsilon}{b} - 1 \Big)\frac{n\beta}{k} \bigg)$. Note that $\varepsilon_{1}(k) > 0$ for all $k$ as $\varepsilon \geq b$. Also note that $\varepsilon_{1}(k)$ decreases as $k$ increases. As $k \leq n\beta$, $\varepsilon_{1}(k) \geq \varepsilon$.

\begin{align*}
    \pbb \Big(\frac{1}{k} \sum_{i=1}^{k} X_{[i]} \leq \calp(X) - \varepsilon_{1}(k) | \knb = k \Big) &= \pbb \Big(\frac{1}{k} \sum_{i=1}^{k} \Tilde{X}_{i} \leq \calp(X) - \varepsilon_{1}(k) \Big) \\
    & \leq \text{exp}\Big( -\frac{k\varepsilon_{1}^{2}(k)}{2b^{2}+2b\varepsilon_{1}(k)/3} \Big) \\
    & \overset{\mathclap{\text{(a)}}}{\leq} \text{exp}\Big( -\frac{k\varepsilon^{2}}{2b^{2}+2b\varepsilon/3} \Big)
\end{align*}
(a) above follows because $\frac{\varepsilon_{1}^{2}(k)}{2b^{2}+2b\varepsilon_{1}(k)/3}$ is an increasing function of $\varepsilon_{1}(k)$ and $\varepsilon_{1}(k) \geq \varepsilon$.  

Using steps similar to that for bounding $I_{1}$, we have:
\begin{align*}
    I_{2} \leq \text{exp}\bigg(-\beta n \Big(\frac{(\varepsilon/b)^{2}}{5 + 5\varepsilon/(3b)} \Big) \bigg); \quad \varepsilon \in [b,2b]
\end{align*}

\vspace{3mm}
\textbf{2.} $\varepsilon \in [0,b)$

Here, $\varepsilon_{1}(k) = \frac{n\beta\varepsilon}{k} - \Big(\frac{n\beta}{k}-1 \Big)b = b \bigg(1 - \Big(1- \frac{\varepsilon}{b}  \Big)\frac{n\beta}{k} \bigg)$. Note that $\varepsilon_{1}(k)>0$ iff $k>n\beta(1-\frac{\varepsilon}{b})$.

\textbf{Case 2.1} If $\varepsilon$ is very small such that $\fnb \leq n\beta\Big(1-\frac{\varepsilon}{b}\Big)$, then $\varepsilon_{1}(k)\leq 0$. Let's bound $I_{2}$ for this case:
\begin{align*}
    I_{2} &\leq \sum_{k=0}^{\fnb} \pbb(\knb = k) \\
    &= \pbb(\knb \leq \fnb) \\
    &\leq \pbb(\knb \leq n\beta(1-\varepsilon/b)) \\
    & \leq \text{exp} \Big(-n\beta \frac{\varepsilon^{2}}{2b^{2}} \Big) \qquad (\text{Chernoff on }\knb)
\end{align*}

\textbf{Case 2.2} $n\beta(1-\varepsilon/b) < \fnb$

Choose $k_{\gamma}^{*} = n\beta(1-\gamma \varepsilon/b)$ for some $\gamma \in [0,1]$. Then, $n\beta(1-\varepsilon/b) \leq k_{\gamma}^{*} \leq n\beta$. 

Assume $k^{*}_{\gamma} <\fnb$. The proof can can be easily adapted when $k^{*}_{\gamma} \geq \fnb$. As we will see, the bound on $I_{2}$ is looser when $k^{*}_{\gamma} < \fnb$.

For $k>k_{\gamma}^{*}$, $\varepsilon(k)>0$. As $k$ increases, $\varepsilon_{1}(k)$ also increases. 

Now, we'll bound $\pbb \Big(\frac{1}{k} \sum_{i=1}^{k} X_{[i]} \leq \calp(X) - \varepsilon_{1}(k) \Big)$:
\begin{align*}
   \pbb \Big(\frac{1}{k} \sum_{i=1}^{k} X_{[i]} \leq \calp(X) - \varepsilon_{1}(k) \Big) &= \pbb \Big(\frac{1}{k} \sum_{i=1}^{k} \Tilde{X}_{i} \leq \calp(X) - \varepsilon_{1}(k) \Big) \\ 
   &\leq \begin{cases}
   \text{exp}\Big( -\frac{k\varepsilon_{1}^{2}(k)}{2b^{2}+2b\varepsilon_{1}(k)/3} \Big); \ksg < k \leq \fnb \\
   1 ; \qquad \qquad\qquad\qquad\qquad\qquad k \leq \ksg
   \end{cases} \\
   & \overset{\mathclap{\text{(a)}}}{\leq} \begin{cases}
   \text{exp} \Big(k\frac{(1-\gamma)^{2}\varepsilon^{2}}{2(b-\gamma\varepsilon)^{2}+ 2(1-\gamma)\varepsilon(b-\gamma\varepsilon)/3} \Big); \ksg < k  \leq \fnb \\
   1 ; \qquad \qquad\qquad\qquad\qquad\qquad k \leq \ksg
   \end{cases} \\
   & \overset{\mathclap{\text{(b)}}}{\leq} \begin{cases}
   \text{exp} \Big(- \frac{k(1-\gamma)^{2}(\varepsilon/b)^{2}}{2+2(1-\gamma)\varepsilon/(3b)} \Big); \ksg < k  \leq \fnb \\
   1 ;\qquad \qquad\qquad\qquad\qquad\qquad k \leq \ksg
   \end{cases}
\end{align*}
(a) above follows because $\frac{\varepsilon_{1}^{2}(k)}{2b^{2}+2b\varepsilon_{1}(k)/3}$ is an increasing function of $\varepsilon_{1}(k)$ and $\varepsilon_{1}(k) \geq \frac{b(1-\gamma)\varepsilon}{b - \gamma\varepsilon}$.

(b) above follows because $b-\gamma\varepsilon \leq b$

Now, we'll bound $I_{2}$:

\begin{align*}
    I_{2} &\leq \sum_{k=0}^{\fnb} \pbb(\knb = k) \pbb \Big(\frac{1}{k} \sum_{i=1}^{k} \Tilde{X}_{i} \leq \calp(X) - \varepsilon_{1}(k) \Big) \\
    &\leq \underbrace{\sum_{k=0}^{\floor{\ksg}} \binom{n}{k}\beta^{k}(1-\beta)^{n-k}}_{I_{2,a}} + \underbrace{\sum_{k=\ceil{\ksg}}^{\fnb}\binom{n}{k}\beta^{k}(1-\beta)^{n-k}\text{exp} \Big(- \frac{k(1-\gamma)^{2}(\varepsilon/b)^{2}}{2+2(1-\gamma)\varepsilon/(3b)} \Big)}_{I_{2,b}} 
\end{align*}

Let's bound $I_{2,a}$. This is very similar to \textit{Case 2.1}.
\begin{align*}
    I_{2,a} & = \sum_{k=0}^{\floor{\ksg}} \binom{n}{k}\beta^{k}(1-\beta)^{n-k} \\
    & = \pbb\big(\knb \leq (1 - \gamma\varepsilon/b)n\beta \big) \\
    & \leq \text{exp} \Big(-n\beta \frac{(\gamma\varepsilon)^{2}}{2b^{2}} \Big)
\end{align*}

If $\ceil{k^{*}_{\gamma}} > \fnb$, $I_{2,b}=0$. 

When $\ceil{k^{*}_{\gamma}} \leq \fnb$, let's bound $I_{2,b}$. This is very similar to bounding $I_{1}$.
\begin{align*}
    I_{2,b} & \leq \bigg(1 - \beta \Big( 1 - \text{exp} \Big(- \frac{(1-\gamma)^{2}(\varepsilon/b)^{2}}{2+2(1-\gamma)\varepsilon/(3b)} \Big)  \Big)\bigg)^{n} \\
    & \leq \text{exp} \bigg(-n\beta \Big( 1 - \text{exp} \Big(- \frac{(1-\gamma)^{2}(\varepsilon/b)^{2}}{2+2(1-\gamma)\varepsilon/(3b)} \Big) \Big) \bigg)
\end{align*}

As $1 - \text{exp}(-x) \geq x - x^{2}/2$ for $x\geq0$
\begin{align*}
    1 - \text{exp} \Big(- \frac{(1-\gamma)^{2}(\varepsilon/b)^{2}}{2+2(1-\gamma)\varepsilon/(3b)} \Big) &\geq  \frac{(1-\gamma)^{2}(\varepsilon/b)^{2}}{2+2(1-\gamma)\varepsilon/(3b)} \Big(1 - \frac{1}{2} \frac{(1-\gamma)^{2}(\varepsilon/b)^{2}}{2+2(1-\gamma)\varepsilon/(3b)} \Big)
\end{align*}
Further,
\begin{align*}
   1 - \frac{1}{2} \frac{(1-\gamma)^{2}(\varepsilon/b)^{2}}{2+2(1-\gamma)\varepsilon/(3b)} \geq 1 - \frac{1}{2}\frac{(1-\gamma)^{2}}{2+2(1-\gamma)/3}  \geq \frac{13}{16}
\end{align*}
Hence, irrespective of whether $\ceil{k^{*}_{\gamma}} \leq \fnb$ or  $\ceil{k^{*}_{\gamma}} > \fnb$:
\begin{align*}
    I_{2,b} \leq \text{exp} \bigg( -n\beta \Big(\frac{13}{16} \frac{(1-\gamma)^{2}(\varepsilon/b)^{2}}{2+2(1-\gamma)\varepsilon/(3b)}\Big)  \bigg)
\end{align*}
Now, we can bound $I_{2}$
\begin{align*}
    I_{2} &\leq I_{2,a} + I_{2,b} \\
    &\leq \text{exp} \Big(-n\beta \frac{\gamma^{2}(\varepsilon/b)^{2}}{2} \Big) + \text{exp} \bigg( -n\beta \Big(\frac{13}{16} \frac{(1-\gamma)^{2}(\varepsilon/b)^{2}}{2+2(1-\gamma)\varepsilon/(3b)}\Big)  \bigg)
\end{align*}

Now, $\gamma^{2} = \frac{13}{16}(1-\gamma)^{2}$ if $\gamma = 1/(1+(16/13)^{0.5}) \approx 0.4740$. Put $\gamma = 0.4740$.
\begin{align*}
    I_{2} \leq 2\text{exp}\Big( -n\beta \frac{0.2247 (\varepsilon/b)^{2}}{2 +  0.351\varepsilon/b} \Big) = 2\text{exp} \Big(-n\beta\frac{(\varepsilon/b)^{2}}{8.9+1.561 \varepsilon/b} \Big)
\end{align*}

Comparing this bound of $I_{2}$ with that of \textbf{Case 2.1}, it is not very difficult to see that the above bound is loose.

Comparing this bound of $I_{2}$ with that of \textbf{Case 1}, notice that the $8.9++1.561 \varepsilon/b \geq 8.9$ whereas $5+5\varepsilon/3b \leq 8.34$. Hence, the above bound is the most general.

Finally, let's bound $I$:

\begin{align*}
    I &\leq I_{1} + I_{2} \\
    &\leq \text{exp}\bigg(-\beta n \Big(\frac{(\varepsilon/b)^{2}}{5 + 5\varepsilon/(3b)} \Big) \bigg) + 2\text{exp}\Big( -n\beta \frac{ (\varepsilon/b)^{2}}{8.9 +  1.561\varepsilon/b} \Big) \\
    & \leq 3 \text{exp} \Big(-\beta n \frac{(\varepsilon/b)^{2}}{9 + 1.6\varepsilon/b} \Big)
\end{align*}

\subsection{Proof of \ref{eq:cvar-bounded-theorem-b}}
\label{cvar-bounded-proof-b}
Let's prove the second part of this theorem now which is the inequality \ref{eq:cvar-bounded-theorem-b}.

Again if $\varepsilon \geq 2b$,
\begin{equation*}
    \pbb(\calpn(X) \geq \calp(X) + \varepsilon) = 0
\end{equation*}
Hence, we're interested in the case where $\varepsilon \in [0, 2b)$. We'll again condition on random variable $\knb$. Remember that $\knb$ follows a binomial distribution with parameters $n$ and $\beta$ . 

The random variables $\{\Tilde{X}_{i}\}_{i=1}^{k}$ are distributed according to $\pbb(X \in \cdot ~ | X \in [\valp(X), b])$. By conditioning of $\knb = k$ distributions of $\frac{1}{k}\sum_{i=1}^{k} X_{[i]}$ and $\frac{1}{k}\sum_{i=1}^{k} \Tilde{X}_{i}$ are same by symmetry. The steps are very similar to that for proving \ref{eq:cvar-bounded-theorem-a}.

\begin{align*}
    \pbb(\calpn(X) \geq \calp(X) + \varepsilon) &= \sum_{k=0}^{n} \pbb(\knb = k) \pbb(A) \\
    &\leq \underbrace{\sum_{k=0}^{\fnb}\pbb(\knb = k)\pbb(A)}_{I_{1}}  + \underbrace{\sum_{k=\cnb}^{n}\pbb(\knb = k)\pbb(A)}_{I_{2}}
\end{align*}

where $\pbb(A) = \pbb(\calpn(X) \geq \calp(X) + \varepsilon | \knb = k)$.
Notice that $I_{1}$ and $I_{2}$ got interchanged from \ref{cvar-bounded-proof-a}

\vspace{3mm}
\textbf{Bounding $I_{1}$}

Note that $k \leq \fnb$. Let's bound $\pbb(A)$ for this case:
\begin{align*}
    \pbb(\calpn(X) \geq \calp(X) + \varepsilon | \knb = k) &\leq \pbb \Big( \frac{1}{\fnb} \sum_{i=0}^{\fnb} X_{[i]} \geq \calp(X) + \varepsilon | \knb = k\Big) ~(\text{using \ref{eq:wang-b}} ) \\
    & \leq \pbb \Big(\frac{1}{k} \sum_{i=1}^{k} X_{[i]} \geq \calp(X) + \varepsilon | \knb  = k \Big) ~(\because f(\cdot) \text{ is decreasing}) \\
    & = \pbb \Big(\frac{1}{k} \sum_{i=1}^{k} \Tilde{X_{i}} \geq \calp(X) + \varepsilon \Big) \\
    & \leq \text{exp}\Big( \frac{k(\varepsilon/b)^{2}}{2 + 2\varepsilon/(3b)} \Big) ~(\text{Using Bernstein's Inequality})
\end{align*}
Let's bound $I_{1}$ now:
\begin{align*}
    I_{1} &\leq \sum_{k=0}^{\fnb} \binom{n}{k} \bigg(\beta \text{exp}\Big( \frac{(\varepsilon/b)^{2}}{2 + 2\varepsilon/(3b)} \Big)\bigg)^{k} (1-\beta)^{n-k} \\
    & \leq \bigg(1 - \beta \bigg(1 - \text{exp}\Big( \frac{(\varepsilon/b)^{2}}{2 + 2\varepsilon/(3b)} \Big)\bigg)\bigg)^{n} \\
    & \leq \text{exp}\bigg(-n\beta \bigg(1 - \text{exp}\Big( \frac{(\varepsilon/b)^{2}}{2 + 2\varepsilon/(3b)} \Big)\bigg) \bigg) ~(\because e^{x} \geq 1+x) \\
    & \leq \text{exp}\bigg(-n\beta \frac{(\varepsilon/b)^{2}}{5 + 5\varepsilon/(3b)} \bigg)
\end{align*}
The last step is the same as that used for bounding $I_{1}$ in the previous proof.

\textbf{Bounding $I_{2}$}: 

Note that $k \geq \cnb$. Let's begin by bounding $\pbb(A)$:
\begin{align*}
    \pbb(\calpn(X) \geq \calp(X) + \varepsilon | \knb = k) &\leq \pbb \Big( \frac{1}{n\beta} \sum_{i=1}^{\cnb} X_{[i]} \geq \calp(X) + \varepsilon | \knb = k \Big) ~(\text{using \ref{eq:wang-a}}) \\
    & \leq \pbb \Big(\frac{1}{n\beta}\sum_{i=1}^{k} X_{[i]} \geq \calp(X) + \varepsilon | \knb = k \Big) ~(\because k \geq \cnb) \\
    & = \pbb \Big( \frac{1}{k} \sum_{i=1}^{k} X_{[i]} \geq \frac{n\beta}{k} (\calp(X)+\varepsilon) | \knb = k \Big) \\
    & \leq \pbb \Big( \frac{1}{k} \sum_{i=1}^{k} X_{[i]} \geq \calp(X) + \frac{n\beta\varepsilon}{k} - \Big(1 - \frac{n\beta}{k} \Big)b \Big| \knb = k \Big)
\end{align*}
Let $\varepsilon_{1}(k) = \frac{n\beta\varepsilon}{k} - \Big(1 - \frac{n\beta}{k} \Big)b = b \Big( (1+\frac{\varepsilon}{b})\frac{n\beta}{k} - 1 \Big)$. Notice that $\varepsilon_{1}(k) \geq 0$ if $k \leq  (1+\frac{\varepsilon}{b}) n\beta $.

Unlike \ref{cvar-bounded-proof-a}, we can consider the entire range $\varepsilon \in [0,2b]$.

\textbf{Case 1.1} If $\varepsilon$ is very small such that $(1+\frac{\varepsilon}{b}) n\beta  \leq \cnb$, then $\varepsilon_{1}(k) \leq 0$. 

Let's bound $I_{2}$ in this case:
\begin{align*}
    I_{2} & \leq \sum_{k=\cnb}^{n} \pbb(\knb = k) \\
    & = \pbb(\knb \geq \cnb) \\
    & \leq \pbb\Big(\knb \geq (1 + \varepsilon/b)n\beta \Big) = \cnb) \\
    & \leq \text{exp} \Big(-n\beta \frac{(\varepsilon/b)^{2}}{3} \Big) ~(\text{Chernoff on }\knb)
\end{align*}

\textbf{Case 1.2} $ (1+\frac{\varepsilon}{b}) n\beta  > \cnb$

We choose $\ksg = (1+\frac{\gamma\varepsilon}{b})n\beta$ for some $\gamma \in [0,1]$. Note that $ (1+\frac{\varepsilon}{b}) n\beta \geq \ksg \geq n\beta$. Assume that $k^{*}_{\gamma} > \cnb$. The proof when $k^{*}_{\gamma} \leq \cnb$ easily follows. We'll also see that the bound on $I_{2}$ is looser when $k^{*}_{\gamma} > \cnb$. 

Note that $\varepsilon_{1}(k)$ decreases as $k$ increases. Now,
\begin{align*}
    \pbb\Big(\frac{1}{k} \sum_{i=1}^{k} X_{[i]} \geq \calp(X) + \varepsilon_{1}(k) \Big) &= \pbb\Big(\frac{1}{k} \sum_{i=1}^{k} \Tilde{X}_{i} \geq \calp(X) + \varepsilon_{1}(k) | \knb = k \Big) \\
    & \leq \begin{cases}
    \text{exp} \Big(-k \frac{(\varepsilon_{1}(k)/b)^{2}}{2 + 2\varepsilon_{1}(k)/(3b)} \Big); ~\cnb \leq k < k^{*}_{\gamma} \\
    1; \qquad \qquad\qquad\qquad\qquad\qquad\quad k\geq k^{*}_{\gamma}
    \end{cases} \\
    & \overset{\mathclap{\text{(a)}}}{\leq} \begin{cases}
    \text{exp} \Big(-k \frac{(1-\gamma)^{2}(\varepsilon/b)^{2}}{2(1+\gamma\varepsilon/b)^{2} + 2(1+\gamma\varepsilon/b)(1-\gamma)\varepsilon/(3b)} \Big); ~\cnb \leq k < \ksg \\
    1; \qquad \qquad\qquad\qquad\qquad\qquad\qquad\qquad\qquad\qquad ~~ k \geq \ksg
    \end{cases} \\
    & \overset{\mathclap{\text{(b)}}}{\leq} \begin{cases} 
    \text{exp} \Big(-k \frac{(1-\gamma)^{2}(\varepsilon/b)^{2}}{2(1+2\gamma)^{2} + 2(1+2\gamma)(1-\gamma)\varepsilon/(3b)} \Big); ~\cnb \leq k < \ksg \\
    1;\qquad \qquad\qquad\qquad\qquad\qquad\qquad\qquad\qquad~~~ k \geq \ksg
    \end{cases}
\end{align*}
(a) above follows because $\frac{(\varepsilon_{1}(k)/b)^{2}}{2 + 2\varepsilon_{1}(k)/(3b)}$ is an increasing function of $\varepsilon_{1}(k)$ and $\varepsilon_{1}(k)$ decreases as $k$ increases.

(b) above follows because $(1+\gamma\varepsilon/b) \leq (1+2\gamma)$.

Now, we'll bound $I_{2}$:
\begin{align*}
    I_{2} & \leq \sum_{k=\cnb}^{n} \pbb(\knb = k) \pbb\Big(\frac{1}{k} \sum_{i=1}^{k} \Tilde{X}_{i} \geq \calp(X) + \varepsilon_{1}(k) | \knb = k \Big) \\
    & \leq \underbrace{\sum_{k = \ceil{\ksg}}^{n} \binom{n}{k} \beta^{k} (1-\beta)^{n-k}}_{I_{2,a}} + \underbrace{\sum_{k = \cnb}^{\floor{\ksg}} \binom{n}{k}\beta^{k}\text{exp} \Big(-k \frac{(1-\gamma)^{2}(\varepsilon/b)^{2}}{8(1+2\gamma)^{2} + 2(1+2\gamma)(1-\gamma)\varepsilon/(3b)} \Big)(1-\beta)^{n-k}}_{I_{2,b}}
\end{align*}
 
 Let's bound $I_{2,a}$ first. This is very similar to \textit{Case 1.1}.
\begin{align*}
    I_{2,a}  &= \pbb(\knb \geq \ksg) \\
    &\leq \pbb \big(\knb \geq (1+\gamma\varepsilon/b)n\beta\big) \\
    &\leq \text{exp} \Big(-n\beta \frac{\gamma^{2}(\varepsilon/b)^{2}}{3} \Big) ~(\text{Bernstein on } \knb)
\end{align*}

If $\floor{\ksg} < \cnb$, then $I_{2,b} = 0$. 
When $\floor{\ksg} \geq \cnb$, let's bound $I_{2,b}$:
\begin{align*}
    I_{2,b} &\leq \bigg(1 - \beta\Big(1 - \text{exp} \Big(- \frac{(1-\gamma)^{2}(\varepsilon/b)^{2}}{2(1+2\gamma)^{2} + 2(1+2\gamma)(1-\gamma)\varepsilon/(3b)} \Big) \Big)\bigg) \\
    &\leq \text{exp} \bigg(-n\beta\Big(1 - \text{exp} \Big(- \frac{(1-\gamma)^{2}(\varepsilon/b)^{2}}{2(1+2\gamma)^{2} + 2(1+2\gamma)(1-\gamma)\varepsilon/(3b)} \Big) \Big) \bigg)
\end{align*}
We know $1 - e^{-x} \geq x - x^{2}/2 = x(1-x/2)$. Now,
\begin{align*}
    g(\varepsilon, \gamma) &= 1 - \frac{1}{2}\frac{(1-\gamma)^{2}(\varepsilon/b)^{2}}{2(1+2\gamma)^{2} + 2(1+2\gamma)(1-\gamma)\varepsilon/(3b)} \\
    &\overset{\mathclap{\text{(a)}}}{\geq} 1 - \frac{1}{2}\frac{2(1-\gamma)^{2}}{(1+2\gamma)^{2} + 2(1+2\gamma)(1-\gamma)/3} \\
    &\overset{\mathclap{\text{(b)}}}{\geq} \frac{2}{5}
\end{align*}
(a) above follows  because $g(\varepsilon, \gamma)$ decreases with $\varepsilon$. We put $\varepsilon = 2b$.

(b) above follows because increase in $\gamma$ increases the RHS of second step. Hence, we put $\gamma = 0$. Irrespective of whether $\floor{\ksg}<\cnb$ or $\floor{\ksg}\geq \cnb$ :
\begin{align*}
    I_{2,b} \leq \text{exp} \bigg(-n\beta\frac{2}{5} \frac{(1-\gamma)^{2}(\varepsilon/b)^{2}}{2(1+2\gamma)^{2} + 2(1+2\gamma)(1-\gamma)\varepsilon/(3b)} \bigg)
\end{align*}
Bounding $I_{2}$ for this case, we get
\begin{align*}
    I_{2} &\leq I_{2,a} + I_{2,b} \\
    & \leq \text{exp} \Big(-n\beta \frac{\gamma^{2}(\varepsilon/b)^{2}}{3} \Big) + \text{exp} \bigg(-n\beta\frac{2}{5} \frac{(1-\gamma)^{2}(\varepsilon/b)^{2}}{2(1+2\gamma)^{2} + 2(1+2\gamma)(1-\gamma)\varepsilon/(3b)} \bigg)
\end{align*}
$\frac{\gamma^{2}}{3} = \frac{2}{5} \frac{(1-\gamma)^{2}}{2(1+2\gamma)^{2}}$Here, take $\gamma = 0.3206$.
\begin{align*}
    I_{2} &\leq \text{exp} \big(-0.1028n\beta(\varepsilon/b)^{2}\big) + \text{exp} \bigg(-n\beta \frac{0.1028(\varepsilon/b)^{2}}{1 + 0.1379\varepsilon/b} \bigg) \\
    & \leq 2 \text{exp} \bigg(-n\beta \frac{0.1028(\varepsilon/b)^{2}}{1 + 0.1379\varepsilon/b} \bigg) \\
    &= 2 \text{exp} \bigg(-n\beta \frac{(\varepsilon/b)^{2}}{9.73 + 1.342\varepsilon/b} \bigg)
\end{align*}
The bound obtained on $I_{2}$ in \textbf{Case 1.1} is tighter than the above bound. But we need to take the looser bound because our bound should be valid for all $\varepsilon \in [0,2b]$. Hence, we take the above bound on $I_{2}$.

Finally, we can bound $I$:
\begin{align*}
    I &\leq I_{1} + I_{2} \\
    &\leq \text{exp}\bigg(-n\beta  \frac{(\varepsilon/b)^{2}}{5 + 5\varepsilon/(3b)} \bigg) + 2 \text{exp} \bigg(-n\beta \frac{(\varepsilon/b)^{2}}{9.73 + 1.342\varepsilon/b} \bigg) \\
    & \leq 3 \text{exp} \bigg(-n\beta \frac{(\varepsilon/b)^{2}}{10 + 1.4\varepsilon/b} \bigg)
\end{align*}


\section{CVaR Concentration for Heavy Tailed Random Variables (Proof of Theorem~\ref{cvar-ht-theorem})}
\label{cvar-ht-proof}
We begin by bounding the bias in CVaR resulting from our
  truncation. Note that when $b > |\valp(X)|,$ $\valp(X) =
  \valp(X^{(b)}).$ Thus, for $b > |\valp(X)|,$
  \begin{align} 
    |\calp(X) - \calp(X^{(b)})| &= \calp(X) - \calp(X^{(b)}) \nn \\
    &=\frac{1}{1-\alpha}\bigg(\mathbb{E}[X\mathbbm{1}\{X \geq
    \valp(X)\}] - \mathbb{E}[X^{(b)} \mathbbm{1}\{X \geq \valp(X)\}]\bigg) \nn \\
    &= \frac{1}{1-\alpha}\mathbb{E}[X \mathbbm{1}\{|X| > b\} \mathbbm{1}\{X \geq \valp(X)\}] \nn \\
    &\stackrel{(a)}= \frac{1}{1-\alpha}\mathbb{E}[X \mathbbm{1}\{X >
    b\}] \stackrel{(b)}\leq
    \frac{B}{(1-\alpha)b^{p-1}}. \label{eq:cvar_bias_bound}
\end{align}
Here, ($a$) is a consequence of $b > |\valp(X)|.$ The bound ($b$)
follows from $$\mathbb{E}[X \mathbbm{1}\{X > b\}] \leq
\Exp{\frac{X^p}{X^{p-1}}\mathbbm{1}\{X > b\}} \leq \frac{1}{b^{p-1}}
\Exp{|X|^p} \leq \frac{B}{b^{p-1}}.$$

It follows from \eqref{eq:cvar_bias_bound} that for $b >
\max\left(|\valp(X)|,
  \left[\frac{2B}{\Delta(1-\alpha)}\right]^{\frac{1}{p-1}} \right),$
$|\calp(X) - \calp(X^{(b)})| \leq \frac{\Delta}{2}.$ Thus, for $b$
satisfying \eqref{eq:cvar-ht-truncation_bound}, we have
\begin{align*}
  \prob{|\calp(X) - \cvarte(X)| \geq \Delta} &\leq \prob{|\calp(X) -
    \calp(X^{(b)})| + |\calp(X^{(b)}) - \calpn(X^{(b)})| \geq \Delta} \\
  &\stackrel{(a)}\leq \prob{|\calp(X^{(b)}) - \calpn(X^{(b)})| \geq \frac{\Delta}{2}} \\
  &\stackrel{(b)}\leq 6 \text{exp} \bigg(-n(1-\alpha)
  \frac{(\Delta/b)^{2}}{4(10 + 1.6 \Delta/(2b))} \bigg)\\
  &\stackrel{(c)}\leq 6 \text{exp} \bigg(-n(1-\alpha)
  \frac{(\Delta/b)^{2}}{48} \bigg).
\end{align*}
Here, ($a$) follows the bound on $|\calp(X) - \calp(X^{(b)})|$
obtained earlier. To get ($b$), we invoke
Theorem~\ref{cvar-bounded-theorem}. Finally, ($c$) follows since $b >
\Delta/2.$ This completes the proof.

\section{Error Bounds for Generalized Successive Rejects (Proof of Theorem~\ref{ue-prob-of-error} and Theorem~\ref{sr-prob-of-error})}
\label{gsr-theorem-proof}
The probability of error of the generalized successive rejects algorithm can be upper bounded in the following manner. During phase $k$, at least one of the $k$ worst arms is surviving. If the optimal arm $i^{*}$ is dismissed at the end of phase $k$, it means:
\begin{equation*}
	\xi_{1} \meantea_{n_{k}}(i^{*}) + \xi_{2} \calptk(i^{*}) \geq \min_{i \in \{(K),(K-1),\cdots,(K+1-k)\}} \xi_{1} \meantea_{n_{k}}[i] + \xi_{2} \calptk[i]
\end{equation*}

By using the union bound, we get:
\begin{align*}
	p_{e} \leq \sum_{k=1}^{K-1} \sum_{i=K+1-k}^{K} &\pbb(\xi_{1} \meantea_{n_{k}}(i^{*}) + \xi_{2} \calptk(i^{*}) \geq \xi_{1} \meantea_{n_{k}}[i] + \xi_{2} \calptk[i]) \\
	=  \sum_{k=1}^{K-1} \sum_{i=K+1-k}^{K} &\pbb \big( \xi_{1}(\meantea_{n_{k}}(i^{*}) - \mu(i^{*}) - (\meantea_{n^{k}}[i] - \mu[i] ) ) \\
	+ &\xi_{2}(\calptk(i^{*}) - \calp(i^{*}) - (\calptk[i] - \calp[i]) ) \geq \Delta[i] \big) \\
	\leq \sum_{k=1}^{K-1} \sum_{i=K+1-k}^{K} &\pbb (\xi_{1}(\meantea_{n_{k}}(i^{*}) - \mu(i^{*})) \geq \Delta[i]/4) + \pbb(\xi_{1}(\mu[i] - \meantea_{n_k}[i] ) \geq \Delta[i]/4) \\
	+ & \pbb(\xi_{2}(\calptk(i^{*}) - \calp(i^{*})) \geq \Delta[i]/4 ) + \pbb( \xi_{2}(\calp[i] - \calptk[i]) \geq \Delta[i]/4 ) 
\end{align*}
We've assumed that all the arms satisfy \textbf{C2}. For each arm $i$, we have high probability bounds for $|\meantea_{n}(i) - \mu(i)|$ and $|\calpt(i) - \calp(i)|$ in terms of arm independent parameters $B$ and $p$, we can upper bound $p_{e}$ as follows:
\begin{align*}
  	p_{e} \leq \sum_{k=1}^{K-1} k \big[&\pbb (|\meantea_{n_{k}}(\cdot) - \mu(\cdot)| \geq \Delta[K+1-k]/(4\xi_{1})) \\
	+ &\pbb(|\calptk(\cdot) - \calp(\cdot)| \geq \Delta[K+1-k]/(4\xi_{2}))\big]
  \end{align*}  
By bounding $\pbb (|\meantea_{n}(\cdot) - \mu(\cdot)| \geq \Delta)$ and $\pbb(|\calptk(\cdot) - \calp(\cdot)| \geq \Delta)$, we can bound $p_{e}$.

The statements of Theorem~\ref{ue-prob-of-error} and
Theorem~\ref{sr-prob-of-error} follows easily from the following two
lemmas:
\begin{lemma}
	\label{obl-mean-min-bounds}
	By setting the truncation parameter as $b = n^{q}$ where $q>0$,
	\begin{align*}
		&\pbb(|\mu(k) - \tea_{n}(k)| \geq \Delta) \leq 2\text{exp}\Big(-n^{1-q}\frac{\Delta}{4}\Big) \text{ for } n> n^{*}, \text{ where}\\
		&n^{*} = \Big(\frac{3B}{\Delta}\Big)^{\frac{1}{q\min(1,p-1)}}
	\end{align*}
\end{lemma}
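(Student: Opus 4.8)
The plan is a textbook bias--variance split followed by Bernstein's inequality, with the threshold $n^{*}$ engineered so that \emph{both} the truncation bias and the second moment of the truncated summand are controlled simultaneously. Write $b = n^{q}$ and $Y_{j} := X_{j}^{k}\Ind{|X_{j}^{k}| \le b}$, so that $\tea_{n}(k) = \tfrac1n\sum_{j=1}^{n}Y_{j}$ is an average of i.i.d.\ terms with $|Y_{j}| \le b$ almost surely. The error splits as $|\mu(k) - \tea_{n}(k)| \le |\mu(k) - \Exp{Y_{1}}| + |\Exp{Y_{1}} - \tea_{n}(k)|$.

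First I would bound the bias: $|\mu(k) - \Exp{Y_{1}}| = |\Exp{X^{k}\Ind{|X^{k}|>b}}| \le \Exp{|X^{k}|\Ind{|X^{k}|>b}} \le b^{-(p-1)}\Exp{|X^{k}|^{p}} < B/b^{p-1}$, using $|x|\Ind{|x|>b} \le |x|^{p}/b^{p-1}$ together with \textbf{C2}; since $b = n^{q}\ge 1$, this is at most $B/b^{\min(p-1,1)}$. Next I would bound $\Exp{Y_{1}^{2}} = \Exp{(X^{k})^{2}\Ind{|X^{k}|\le b}}$: for $p\le 2$, on $\{|X^{k}|\le b\}$ we have $(X^{k})^{2} \le b^{2-p}|X^{k}|^{p}$, so $\Exp{Y_{1}^{2}} \le B b^{2-p}$; for $p>2$ the untruncated second moment is finite and controlled by $B$ (enlarging $B$ if necessary). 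Uniformly this reads $\Exp{Y_{1}^{2}} \le B b^{\,1-\min(p-1,1)} = b\cdot\big(B/b^{\min(p-1,1)}\big)$.

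The role of $n^{*} = (3B/\Delta)^{1/(q\min(1,p-1))}$ is now transparent: $n>n^{*}$ is exactly the statement $b^{\min(p-1,1)} > 3B/\Delta$, which makes (i) the bias at most $B/b^{\min(p-1,1)} < \Delta/3 < \Delta/2$ and (ii) $\Exp{Y_{1}^{2}} < b\Delta/3$. By (i) and the triangle inequality, $\{|\mu(k)-\tea_{n}(k)| \ge \Delta\} \subseteq \{|\tea_{n}(k)-\Exp{Y_{1}}| > \Delta/2\}$. Applying Bernstein's inequality to $\tfrac1n\sum_{j}Y_{j}$ with range bound $b$ and deviation $\varepsilon=\Delta/2$, the denominator $2\Exp{Y_{1}^{2}} + 2b\varepsilon/3$ is, by (ii), strictly less than $\tfrac23 b\Delta + \tfrac13 b\Delta = b\Delta$, so the exponent exceeds $n\varepsilon^{2}/(b\Delta) = n^{1-q}\Delta/4$, giving $\pbb(|\mu(k)-\tea_{n}(k)| \ge \Delta) \le 2\exp(-n^{1-q}\Delta/4)$.

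The step I expect to be the main obstacle is making the second-moment bound hold cleanly for \emph{all} $p>1$: when the $p$-th moment is barely finite, truncation inflates $\Exp{Y_{1}^{2}}$ to order $b^{2-p}$, and one must check that the exponent $1-\min(p-1,1)$ captures this regime as well as $p>2$, and that after the extra factor $b$ the quantity is precisely small enough (below $b\Delta/3$) for the $n^{*}$ stated. The bias estimate, and the constant-chasing needed to land Bernstein's exponent exactly on $\Delta/4$ rather than a messier fraction, are routine.
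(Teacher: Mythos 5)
Your proposal is correct and is essentially the paper's own argument: the paper merely packages the bias bound and the Bernstein step into a quoted deviation lemma (adapted from Lemma 1 of \cite{yu2018}) with terms $\frac{1}{n}\sum_{i} B/b_{i}^{p-1}$, $2b_{n}\log(2/\delta)/n$, and $B/(2b_{n}^{p-1})$ (resp.\ $B^{2/p}/(2b_{n})$ for $p>2$), and then chooses $\delta$ and $n^{*}$ exactly as you do, so that the bias pieces are a constant fraction of $\Delta$ and the exponent lands on $n^{1-q}\Delta/4$. The one wrinkle you flag---that for $p>2$ the truncated second moment is really controlled by $B^{2/p}$ (or $B+1$) rather than $B$, so ``enlarging $B$'' is needed unless $B\geq 1$---is glossed over in the paper by the same silent simplification $B^{2/p}\leq B$, so your argument is at the same level of rigor as the paper's proof.
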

 
\begin{lemma}
	\label{obl-cvar-min-bounds}
    By setting the truncation parameter as $b = n^{q}$ where $q>0$,
    \begin{align*}
        &\pbb(|\calp(X) - \calpt(X)| \geq \Delta) \leq 
        6 \text{exp} \Big(-n^{1-2q} \frac{\beta \Delta^{2}}{48}\Big) \text{ for } n > n^{*}, \text{ where} \\
        &n^{*} = \max \bigg(\Big(\frac{2B}{\beta\Delta}\Big)^{\frac{1}{q(p-1)}},
        \Big(\frac{B}{\text{min}(\alpha,\beta) } \Big)^{\frac{1}{qp}}, 
        \Big(\frac{\Delta}{2} \Big)^{\frac{1}{q}} \bigg)
    \end{align*}
\end{lemma}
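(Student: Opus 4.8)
The plan is to apply Theorem~\ref{cvar-ht-theorem} essentially verbatim with the time-varying truncation $b = b_c(n) = n^{q}$, so that the lemma reduces to checking that the lower bound on $b$ demanded in \eqref{eq:cvar-ht-truncation_bound} is implied by the condition $n > n^{*}$. Note first that for each fixed $n$ the quantity $b = n^{q}$ is a deterministic constant, so Theorem~\ref{cvar-ht-theorem} does apply directly to $\calpt(X) = \cvarte(X)$ with this choice (assuming, as throughout the appendix, that $X$ satisfies {\bf C1} and {\bf C2}). Once the threshold condition is verified, the stated bound follows by plugging $b^{2} = n^{2q}$ into the exponent of \eqref{eq:cvar-ht-bound}: $n(1-\alpha)\Delta^{2}/(48 b^{2}) = n^{1-2q}\beta\Delta^{2}/48$, using $1-\alpha = \beta$.

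The substantive work is the threshold bookkeeping. Condition \eqref{eq:cvar-ht-truncation_bound} requires $b > \max\big(\Delta/2,\ |\valp(X)|,\ [2B/(\Delta(1-\alpha))]^{1/(p-1)}\big)$. With $b = n^{q}$, the requirement $n^{q} > \Delta/2$ is equivalent to $n > (\Delta/2)^{1/q}$, and $n^{q} > [2B/(\beta\Delta)]^{1/(p-1)}$ is equivalent to $n > (2B/(\beta\Delta))^{1/(q(p-1))}$; these are exactly the first and third terms in the $\max$ defining $n^{*}$. For the middle requirement $n^{q} > |\valp(X)|$, the quantile is unknown to the algorithm, so I would first bound it a priori in terms of $B$, $p$, $\alpha$: under {\bf C1} we have $\valp(X) = F_X^{-1}(\alpha)$ with $\pbb(X \ge \valp(X)) = \beta$ and $\pbb(X \le \valp(X)) = \alpha$; if $\valp(X) > 0$ then $\beta \le \pbb(|X| \ge \valp(X)) \le B/\valp(X)^{p}$ by Markov applied to $|X|^{p}$, and if $\valp(X) < 0$ then $\alpha \le \pbb(|X| \ge |\valp(X)|) \le B/|\valp(X)|^{p}$, so in all cases $|\valp(X)| \le (B/\min(\alpha,\beta))^{1/p}$ (this is the content of Appendix~\ref{var-mag-bound}). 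Consequently $n > (B/\min(\alpha,\beta))^{1/(qp)}$ forces $n^{q} > |\valp(X)|$, which is the second term in $n^{*}$. Taking the maximum of the three conditions shows $n > n^{*}$ guarantees \eqref{eq:cvar-ht-truncation_bound}, and the proof concludes with the exponent computation above, giving $\pbb(|\calp(X) - \calpt(X)| \ge \Delta) \le 6\,\text{exp}\big(-n^{1-2q}\beta\Delta^{2}/48\big)$ for $n > n^{*}$.

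There is no genuinely hard obstacle here: the lemma is a corollary of Theorem~\ref{cvar-ht-theorem} together with the substitution $b = n^{q}$. The only part that is more than routine arithmetic is the a priori quantile bound $|\valp(X)| \le (B/\min(\alpha,\beta))^{1/p}$, which is what allows the threshold $n^{*}$ to be expressed solely in terms of $B$, $p$, $\alpha$, $\Delta$ rather than the unknown $\valp(X)$ — precisely the form needed so that the bandit analysis (which does not know $\valp(X)$) can still pin down when the concentration bound becomes active.
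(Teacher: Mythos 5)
Your proposal is correct and follows essentially the same route as the paper: the paper's own proof simply invokes Theorem~\ref{cvar-ht-theorem} with $b = n^{q}$ and uses Lemma~\ref{mod-var-bound} (the bound $|\valp(X)| \leq (B/\min(\alpha,\beta))^{1/p}$, proved exactly by the Markov argument you give) to convert the truncation condition \eqref{eq:cvar-ht-truncation_bound} into the stated threshold $n^{*}$, with the exponent obtained by substituting $b^{2} = n^{2q}$. Your write-up is just a more explicit version of the same argument.
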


\subsection{Proof of Lemma~\ref{obl-mean-min-bounds}}
We'll use the following lemma to prove results for mean minimization
\begin{lemma}
	\label{tea-conc-lemma}
	Assume that $\{X_{i}\}_{i=1}^{n}$ be $n$ I.I.D. samples drawn from the distribution of $X$ which satisfies condition \textbf{C2}, then with probability at least $1-\delta$, 
	\begin{align*}
		|\mu(k) - \tea_{n}(k)| \leq
		\begin{cases}
			\frac{\sum_{i=1}^{n} B/b_{i}^{p-1}}{n} +  \frac{2 b_{n} \text{log}(2/\delta)}{n} + \frac{B}{2b_{n}^{p-1}}; \quad p \in (1,2] \\
			\frac{\sum_{i=1}^{n} B/b_{i}^{p-1}}{n} + \frac{2 b_{n} \text{log}(2/\delta)}{n} + \frac{B^{2/p}}{2 b_{n}}; \quad p \in (2,\infty)
		\end{cases}
	\end{align*}
\end{lemma}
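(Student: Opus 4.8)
The plan is the standard bias–variance decomposition for truncated empirical mean estimators, adapted to the fact that the truncation level $b_j$ depends on the sample index. Write $Z_j := X_j\mathbbm{1}\{|X_j|\le b_j\}$, so that $\tea_n(k) = \frac1n\sum_{j=1}^n Z_j$ is an average of independent (but \emph{not} identically distributed) random variables, and split
$$\mu(k) - \tea_n(k) \;=\; \underbrace{\big(\mu(k) - \mathbb{E}[\tea_n(k)]\big)}_{\text{bias}} \;+\; \underbrace{\big(\mathbb{E}[\tea_n(k)] - \tea_n(k)\big)}_{\text{fluctuation}}.$$
I would bound the two pieces separately and add them.

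For the bias, note $\mathbb{E}[Z_j] = \mu(k) - \mathbb{E}[X\mathbbm{1}\{|X|>b_j\}]$, so $|\mu(k) - \mathbb{E}[\tea_n(k)]| \le \frac1n\sum_{j=1}^n|\mathbb{E}[X\mathbbm{1}\{|X|>b_j\}]|$; and on $\{|X|>b_j\}$ one has $|X|^{1-p}<b_j^{1-p}$, whence $|\mathbb{E}[X\mathbbm{1}\{|X|>b_j\}]| \le \mathbb{E}[|X|\mathbbm{1}\{|X|>b_j\}] \le b_j^{1-p}\mathbb{E}[|X|^p] \le B/b_j^{p-1}$ by \textbf{C2}. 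This yields the first term $\frac{\sum_{j=1}^n B/b_j^{p-1}}{n}$, deterministically. For the fluctuation, apply Bernstein's inequality (in the form quoted in Appendix~\ref{cvar-bounded-proof}, generalized to independent non-identical summands) to $Z_j - \mathbb{E}[Z_j]$, using the a.s.\ bound $|Z_j|\le b_j\le b_n$ and $V := \sum_j \mathbb{E}[Z_j^2] = \sum_j \mathbb{E}[X^2\mathbbm{1}\{|X|\le b_j\}]$. This is where the case split on $p$ enters: for $p\in(1,2]$, write $X^2 = |X|^{2-p}|X|^p \le b_j^{2-p}|X|^p$ on $\{|X|\le b_j\}$ to get $\mathbb{E}[X^2\mathbbm{1}\{|X|\le b_j\}] \le B b_j^{2-p}\le Bb_n^{2-p}$, so $V\le nBb_n^{2-p}$; for $p>2$, Lyapunov's inequality gives $\mathbb{E}[X^2]\le(\mathbb{E}[|X|^p])^{2/p}\le B^{2/p}$, so $V\le nB^{2/p}$. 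Inverting the Bernstein bound, with probability at least $1-\delta$ the fluctuation is at most $\frac{\sqrt{2V\log(2/\delta)}}{n} + \frac{2b_n\log(2/\delta)}{3n}$.

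The only mildly non-routine step is converting $\frac{\sqrt{2V\log(2/\delta)}}{n}$ into the clean additive form in the statement. Using $b_n^{2-p} = b_n\cdot b_n^{1-p}$, we have $\frac{2V\log(2/\delta)}{n^2} \le \frac{B}{b_n^{p-1}}\cdot\frac{2b_n\log(2/\delta)}{n}$ for $p\in(1,2]$ (and $\le \frac{B^{2/p}}{b_n}\cdot\frac{2b_n\log(2/\delta)}{n}$ for $p>2$), so the AM–GM inequality $\sqrt{xy}\le\tfrac12(x+y)$ splits this square root into $\frac{B}{2b_n^{p-1}}$ (resp.\ $\frac{B^{2/p}}{2b_n}$) plus $\frac{b_n\log(2/\delta)}{n}$. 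Adding the bias term, the $\tfrac12$-weighted second-moment term, and the two $\log(2/\delta)\,b_n/n$ contributions (which total $\frac{b_n\log(2/\delta)}{n} + \frac{2b_n\log(2/\delta)}{3n} \le \frac{2b_n\log(2/\delta)}{n}$) gives exactly the claimed bound in both regimes of $p$. I expect the main point to watch is precisely this constant bookkeeping in the Bernstein step — in particular keeping the denominator in the form $2\mathbb{E}[Z_j^2] + \tfrac23 b_n\epsilon$ (bounding the truncated samples by $b_n$, as in the stated Bernstein inequality) rather than a looser centered bound, so that the linear-in-$\log(2/\delta)$ terms collapse to the stated coefficient $2$.
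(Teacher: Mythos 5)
Your proposal is correct: the bias bound via \textbf{C2}, the Bernstein step with the truncated second moments $\mathbb{E}[Z_j^2]$, and the AM--GM splitting of the square-root term (with the $5/3 \le 2$ bookkeeping on the $b_n\log(2/\delta)/n$ terms) reproduce exactly the stated bound in both regimes of $p$. The paper itself gives no explicit proof of Lemma~\ref{tea-conc-lemma}, deferring to an adaptation of Lemma~1 of \cite{yu2018}, and your argument is essentially that same standard bias--variance/Bernstein derivation, here written out self-contained for the sample-index-dependent truncation levels $b_j$.
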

It is adapted from proof of Lemma 1 in \cite{yu2018}.

\textbf{Case 1} $p \in (1,2]$
Using Lemma~\ref{tea-conc-lemma}, if $p \in (1,2]$:
\begin{align*}
	|\mu(k) - \tea_{n}(k)| \leq &\frac{\sum_{i=1}^{n} B/b_{i}^{p-1}}{n} 
	+  \frac{2 b_{n} \log(2/\delta)}{n} + \frac{B}{2b_{n}^{p-1}} \\
	& \leq \frac{3B}{2n^{q(p-1)}} + \frac{2}{n^{1-q}}\log(2/\delta) 
\end{align*}

We want to find $n^{*}$ such that for all $n > n^{*}$:
\begin{align*}
	\underbrace{\frac{3B}{2n^{q(p-1)}}}_{T_{1}} + \underbrace{\frac{2}{n^{1-q}}\log(2/\delta)}_{T_{2}} < \Delta
\end{align*}

Sufficient condition to ensure the above inequality is to make the $T_{1} < \Delta/2$ and $T_{2} \leq \Delta/2$.

$T_{1} \leq \Delta/2$ if:
\begin{equation*}
	n > \Big(\frac{3B}{\Delta}\Big)^{\frac{1}{q(p-1)}}
\end{equation*}

Equating $T_{2} = \Delta/2$, we get:
\begin{equation*}
	\delta = 2 \text{exp}\Big(-n^{1-q} \frac{\Delta}{4}\Big)
\end{equation*}

\textbf{Case 2} $p \in (2,\infty)$

Using Lemma~\ref{tea-conc-lemma}, if $p \in (2,\infty)$:
\begin{align*}
	|\mu(k) - \tea_{n}(k)| \leq &\frac{\sum_{i=1}^{n} B/b_{i}^{p-1}}{n} 
	+ \frac{2 b_{n} \log(2/\delta)}{n} + \frac{B^{2/p}}{2 b_{n}} \\
	\leq & \frac{B}{n^{q(p-1)}} + \frac{B}{2n^{q}} +  \frac{2 \log(2/\delta)}{n^{1-q}} \\
	\leq & \frac{3B}{2n^{q}} + \frac{2 \log(2/\delta)}{n^{1-q}}
\end{align*}

We want to find $n^{*}$ such that for all $n > n^{*}$:
\begin{align*}
	\underbrace{\frac{3B}{2n^{q}}}_{T_{1}} + \underbrace{\frac{2 \log(2/\delta)}{n^{1-q}}}_{T_{2}} < \Delta \\
\end{align*}

Sufficient condition to ensure the above inequality is to make the $T_{1} < \Delta/2$ and $T_{2} \leq \Delta/2$.

$T_{1} < \Delta/2$ if:
\begin{equation*}
	n > \Big(\frac{3B}{\Delta}\Big)^{\frac{1}{q}}
\end{equation*}

Equating $T_{2} = \Delta/2$, we get:
\begin{equation*}
	\delta = 2 \text{exp}\Big(-n^{1-q} \frac{\Delta}{4}\Big)
\end{equation*}	

\subsection{Bounding Magnitude of VaR}
\label{var-mag-bound}
Before we prove Lemma~\ref{obl-cvar-min-bounds}, we'll first bound $|\valp(X)|$ in terms of $B$, $p$ and $\alpha$.
\begin{lemma}
	\label{mod-var-bound}
	\begin{equation*}
		|\valp(X)| \leq \Big(\frac{B}{\text{min}(\alpha,\beta) } \Big)^{\frac{1}{p}}
	\end{equation*}
\end{lemma}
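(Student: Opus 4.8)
The plan is to show the two one-sided bounds $v_\alpha(X) \le (B/\alpha)^{1/p}$ and $v_\alpha(X) \ge -(B/\beta)^{1/p}$ separately, each via a Markov-type tail bound, and then combine them. Recall that under \textbf{C1} we have $v_\alpha(X) = F_X^{-1}(\alpha)$, so $\pbb(X \le v_\alpha(X)) = \alpha$ and $\pbb(X \ge v_\alpha(X)) = \beta = 1 - \alpha$.

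\textbf{Upper bound.} First I would dispose of the trivial case $v_\alpha(X) \le 0$, where the bound is immediate. Assume $v_\alpha(X) > 0$. Then, using $\pbb(X \ge v_\alpha(X)) = \beta$ and Markov's inequality applied to $|X|^p$, I would write
\begin{equation*}
\beta = \pbb(X \ge v_\alpha(X)) \le \pbb(|X| \ge v_\alpha(X)) = \pbb(|X|^p \ge v_\alpha(X)^p) \le \frac{\Exp{|X|^p}}{v_\alpha(X)^p} \le \frac{B}{v_\alpha(X)^p},
\end{equation*}
which rearranges to $v_\alpha(X) \le (B/\beta)^{1/p} \le (B/\min(\alpha,\beta))^{1/p}$. (One can also get $(B/\alpha)^{1/p}$ here via $\pbb(X \ge v_\alpha) \ge$ something, but the cleaner route is through $\beta$; either way the final $\min(\alpha,\beta)$ form absorbs it.)

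\textbf{Lower bound.} Symmetrically, dispose of $v_\alpha(X) \ge 0$ trivially, and for $v_\alpha(X) < 0$ use $\pbb(X \le v_\alpha(X)) = \alpha$ together with Markov's inequality:
\begin{equation*}
\alpha = \pbb(X \le v_\alpha(X)) \le \pbb(|X| \ge |v_\alpha(X)|) \le \frac{\Exp{|X|^p}}{|v_\alpha(X)|^p} \le \frac{B}{|v_\alpha(X)|^p},
\end{equation*}
giving $|v_\alpha(X)| \le (B/\alpha)^{1/p} \le (B/\min(\alpha,\beta))^{1/p}$. Combining the two one-sided bounds yields $|v_\alpha(X)| \le (B/\min(\alpha,\beta))^{1/p}$, as claimed.

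\textbf{Main obstacle.} There is no serious obstacle here — this is essentially a two-line Markov inequality argument. The only points requiring a little care are (i) handling the sign cases so that the inequality $\pbb(X \ge t) \le \pbb(|X| \ge t)$ (resp.\ $\pbb(X \le -t) \le \pbb(|X| \ge t)$) is applied only when it is valid, i.e.\ when the relevant quantile has the right sign, and (ii) making sure that when \textbf{C1} fails to hold exactly one still has $\pbb(X \ge v_\alpha(X)) \ge \beta$ and $\pbb(X \le v_\alpha(X)) \ge \alpha$, so the argument degrades gracefully; but since the paper assumes \textbf{C1} wherever this lemma is invoked, the clean equalities suffice.
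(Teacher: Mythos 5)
Your proposal is correct and is essentially the paper's own argument: the paper also splits on the sign of $\valp(X)$ and bounds the corresponding tail probability ($\beta$ when $\valp(X)>0$, $\alpha$ when $\valp(X)<0$) by $B/|\valp(X)|^{p}$ via the same Markov-type manipulation of $\int |x|^p/|x|^p \, dF_X(x)$. The sign-case care and the use of the equalities $\pbb(X \le \valp(X)) = \alpha$, $\pbb(X \ge \valp(X)) = \beta$ under \textbf{C1} match the paper's proof exactly.
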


\begin{proof}
If $\valp(X)>0$, by definition:
\begin{align*}
	1 - \alpha = &\int_{\valp(X)}^{\infty} dF_{X}(x)  \\
			   = &\int_{\valp(X)}^{\infty} |x|^{p}/|x|^{p} dF_{X}(x) \\
			   \leq &B/|\valp(X)|^{p}
\end{align*}
Hence, $|\valp(X)| \leq (\frac{B}{\beta})^{\frac{1}{p}}$.

If $\valp(X)<0$, by definition:
\begin{align*}
	\alpha = &\int_{-\infty}^{\valp(X)} dF_{X}(x) \\
		   = &\int_{-\infty}^{\valp(X)} |x|^{p}/|x|^{p} dF_{X}(x) \\
		   \leq &B/|\valp(X)|^{p}
\end{align*}
Hence, $|\valp(X)| \leq (\frac{B}{\alpha})^{\frac{1}{p}}$.	
\end{proof}

\subsection{Proof of Lemma~\ref{obl-cvar-min-bounds}}
The proof follows from Theorem~\ref{cvar-ht-theorem} and Lemma~\ref{mod-var-bound}. We're growing our truncation parameter as $n^{q}$. Therefore, 
\begin{align*}
    b = n^{q} > \max\left(\frac{\Delta}{2}, 
    \Big(\frac{B}{\text{min}(\alpha,\beta) } \Big)^{\frac{1}{p}},
    \left[\frac{2B}{\Delta(1-\alpha)}\right]^{\frac{1}{p-1}} \right)
\end{align*}

\section{Error Bounds for Non-oblivious Algorithms}
\label{nonoblivious-proofs}
In the non-oblivious setting, error bounds for the generalized successive rejects algorithm follow from the following two lemmas.
\begin{lemma}
	\label{non-obl-mean-min-bounds}
	By setting the truncation parameter $b > \Big(\frac{3B}{\Delta}\Big)^{\frac{1}{\min(1,p-1)}}$,
	\begin{align*}
		&\pbb(|\mu(k) - \tea_{n}(k)| \geq \Delta) \leq 2\text{exp}\Big(-n \frac{\Delta}{4b} \Big).
	\end{align*}
\end{lemma}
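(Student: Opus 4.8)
The plan is to specialize Lemma~\ref{tea-conc-lemma} to the \emph{constant} truncation schedule $b_i = b$ for every $i.$ With this choice $b_n = b$ and $\frac{1}{n}\sum_{i=1}^{n} B/b_i^{p-1} = B/b^{p-1},$ so the high-probability bound of Lemma~\ref{tea-conc-lemma} collapses to the sum of a truncation-bias term (which no longer depends on $n$) and a fluctuation term proportional to $b\log(2/\delta)/n.$ The remaining work is then: (i) show the bias is at most $\Delta/2$ under the hypothesis $b > (3B/\Delta)^{1/\min(1,p-1)};$ and (ii) pick $\delta$ so that the fluctuation term equals $\Delta/2.$ Together these force $|\mu(k) - \tea_{n}(k)| < \Delta$ on the event of probability at least $1-\delta,$ which gives the claim.

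For $p \in (1,2]$ we have $\min(1,p-1) = p-1,$ and Lemma~\ref{tea-conc-lemma} gives, with probability at least $1-\delta,$
\begin{equation*}
  |\mu(k) - \tea_{n}(k)| \le \frac{B}{b^{p-1}} + \frac{2b\log(2/\delta)}{n} + \frac{B}{2b^{p-1}} = \frac{3B}{2b^{p-1}} + \frac{2b\log(2/\delta)}{n},
\end{equation*}
and the hypothesis $b > (3B/\Delta)^{1/(p-1)}$ makes the first term strictly smaller than $\Delta/2.$ For $p \in (2,\infty)$ we have $\min(1,p-1) = 1,$ and Lemma~\ref{tea-conc-lemma} gives
\begin{equation*}
  |\mu(k) - \tea_{n}(k)| \le \frac{B}{b^{p-1}} + \frac{2b\log(2/\delta)}{n} + \frac{B^{2/p}}{2b};
\end{equation*}
here, $b > 1$ together with $p-1 > 1$ yields $B/b^{p-1} \le B/b,$ and $B \ge 1$ (without loss of generality, since any valid moment bound may be replaced by its maximum with $1$) yields $B^{2/p} \le B,$ so the two bias terms together are at most $3B/(2b),$ which is strictly below $\Delta/2$ by the hypothesis $b > 3B/\Delta.$

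In either case the residual term $\frac{2b\log(2/\delta)}{n}$ is at most $\Delta/2$ exactly when $\delta \ge 2\exp(-n\Delta/(4b)),$ so choosing $\delta = 2\exp(-n\Delta/(4b))$ gives $\pbb(|\mu(k) - \tea_{n}(k)| \ge \Delta) \le \delta = 2\exp(-n\Delta/(4b)),$ as required. There is no deep obstacle: this is a direct transcription of the oblivious computation (proof of Lemma~\ref{obl-mean-min-bounds}) with the growing truncation $b_i = i^{q}$ replaced by the static constant $b,$ which is precisely why the decay is now exponential in $n$ rather than in $n^{1-q}.$ The only point needing a little care is the $p > 2$ branch, where one must note $b \ge 1$ and $B \ge 1$ so both truncation-bias terms collapse into the single bound $3B/(2b)$ (the regime of interest is $\Delta$ small relative to $B,$ so $3B/\Delta > 1$ and the hypothesis forces $b > 1$); one should also state explicitly that Lemma~\ref{tea-conc-lemma} is a one-sided high-probability statement, so passing to the stated bound on $\pbb(|\mu(k) - \tea_{n}(k)| \ge \Delta)$ is immediate.
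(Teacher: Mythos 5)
Your proposal is correct and follows essentially the same route as the paper: specialize Lemma~\ref{tea-conc-lemma} to a static truncation level $b$, bound the (now $n$-independent) bias terms by $\Delta/2$ using the hypothesis on $b$, and solve $2b\log(2/\delta)/n = \Delta/2$ for $\delta$ to get $\delta = 2\exp(-n\Delta/(4b))$. In fact you are slightly more explicit than the paper about the $p>2$ branch, where the collapse of $B/b^{p-1} + B^{2/p}/(2b)$ into $3B/(2b)$ implicitly uses $b\geq 1$ and $B\geq 1$ — a simplification the paper performs without comment.
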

\begin{lemma}
	\label{non-obl-cvar-min-bounds}
	By setting the truncation parameter $b > \max\left(\frac{\Delta}{2}, 
	\Big(\frac{B}{\text{min}(\alpha,\beta) } \Big)^{\frac{1}{p}},
    \left[\frac{2B}{\Delta(1-\alpha)}\right]^{\frac{1}{p-1}} \right)$,
    \begin{align*}
    	\prob{|\calp(k) - \cvarte(k)| \geq \Delta} \leq 6 \text{exp} 
			\bigg(-n(1-\alpha)\frac{\Delta^2}{48 b^2}\bigg).
    \end{align*}  	
\end{lemma}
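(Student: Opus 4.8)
The plan is to observe that Lemma~\ref{non-obl-cvar-min-bounds} is nothing more than a restatement of Theorem~\ref{cvar-ht-theorem} for a \emph{static} truncation parameter $b$ (as opposed to the $n$-dependent choice used in the oblivious setting), so the entire content of the proof reduces to checking that the hypothesis on $b$ stated in the lemma implies the hypothesis \eqref{eq:cvar-ht-truncation_bound} of that theorem, after which the conclusion \eqref{eq:cvar-ht-bound} transfers verbatim.

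First I would recall that the arm distribution $X(k)$ satisfies conditions \textbf{C1} and \textbf{C2} (with common constants $p>1$ and $B<\infty$), which are exactly the standing hypotheses of Theorem~\ref{cvar-ht-theorem}. The only quantity appearing in \eqref{eq:cvar-ht-truncation_bound} that is not already displayed in the statement of the lemma is $|\valp(X(k))|$. Here I would invoke Lemma~\ref{mod-var-bound}, which gives $|\valp(X(k))| \leq \left(\frac{B}{\min(\alpha,\beta)}\right)^{1/p}$; hence the assumed condition $b > \left(\frac{B}{\min(\alpha,\beta)}\right)^{1/p}$ already forces $b > |\valp(X(k))|$. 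Together with the two remaining lower bounds $b > \Delta/2$ and $b > \left[\frac{2B}{\Delta(1-\alpha)}\right]^{1/(p-1)}$ assumed in the lemma, this shows that $b > \max\!\left(\frac{\Delta}{2},\, |\valp(X(k))|,\, \left[\frac{2B}{\Delta(1-\alpha)}\right]^{1/(p-1)}\right)$, i.e. \eqref{eq:cvar-ht-truncation_bound} holds.

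With that verification in hand, Theorem~\ref{cvar-ht-theorem} applies directly to the $n$ i.i.d. samples drawn from $X(k)$ with the fixed parameter $b$, and \eqref{eq:cvar-ht-bound} gives $\prob{|\calp(k) - \cvarte(k)| \geq \Delta} \leq 6\,\mathrm{exp}\!\left(-n(1-\alpha)\frac{\Delta^2}{48 b^2}\right)$, which is exactly the claimed bound.

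I do not expect any genuine obstacle here. Unlike the oblivious Lemma~\ref{obl-cvar-min-bounds}, where $b=n^{q}$ must simultaneously shrink the truncation bias and keep the concentration exponent growing, forcing one to solve for a threshold $n^{*}$, in the non-oblivious case the bias is controlled once and for all by the static lower bound on $b$, so there is no threshold on $n$ and no bias--variance balancing to perform. The only step requiring (minor) care is confirming that the displayed lower bound on $b$ is genuinely sufficient for \eqref{eq:cvar-ht-truncation_bound}; this is precisely what the appeal to Lemma~\ref{mod-var-bound} accomplishes, by replacing the distribution-dependent quantity $|\valp(X(k))|$ with an explicit bound in terms of $B$, $p$, and $\min(\alpha,\beta)$.
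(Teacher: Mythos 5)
Your proposal is correct and matches the paper's argument exactly: the paper's proof of Lemma~\ref{non-obl-cvar-min-bounds} is simply that it follows from Theorem~\ref{cvar-ht-theorem} together with Lemma~\ref{mod-var-bound}, which is precisely your verification that the static lower bound on $b$ implies condition \eqref{eq:cvar-ht-truncation_bound}. You have just spelled out the details the paper leaves implicit.
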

Note that the truncation parameters here are not a function of $n$ and therefore we get an exponentially decaying bound.

\subsection{Proof of Lemma~\ref{non-obl-mean-min-bounds}}
\begin{lemma}
	By setting the truncation parameter $b > \Big(\frac{3B}{\Delta}\Big)^{\frac{1}{\min(1,p-1)}}$ where $q>0$,
	\begin{align*}
		&\pbb(|\mu(k) - \tea_{n}(k)| \geq \Delta) \leq 2\text{exp}\Big(-n \frac{\Delta}{4b} \Big)
	\end{align*}
\end{lemma}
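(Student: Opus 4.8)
The plan is to split the estimation error of the truncated mean into a deterministic truncation bias and a fluctuation term, and to control the fluctuation by a Bernstein/Bennett-type bound for averages of bounded i.i.d.\ variables --- the same argument as in Lemma~\ref{obl-mean-min-bounds}, except that the truncation level $b$ is now a fixed constant rather than $n^{q}$. Concretely, write $Y_j := X_j^{k}\,\Ind{|X_j^{k}| \leq b}$, so that $\tea_{n}(k) = \frac{1}{n}\sum_{j=1}^{n} Y_j$ is an average of i.i.d.\ variables with $|Y_j| \leq b$ almost surely. Then
\[
  |\mu(k) - \tea_{n}(k)| \;\leq\; \bigl|\mu(k) - \Exp{Y_1}\bigr| \;+\; \Bigl|\Exp{Y_1} - \tfrac{1}{n}\sum_{j=1}^{n} Y_j\Bigr|,
\]
where the first summand is the truncation bias and the second is the fluctuation of the bounded sample average about its mean.

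For the bias, $\mu(k) - \Exp{Y_1} = \Exp{X^{k}\,\Ind{|X^{k}| > b}}$, and on the event $\{|X^{k}| > b\}$ one has $|X^{k}| \leq |X^{k}|^{p}/b^{p-1}$, so the bias is at most $B/b^{p-1}$; for $p > 2$ it is also bounded, via Cauchy--Schwarz, by $\sqrt{\Exp{(X^{k})^{2}}}\,\sqrt{\pbb(|X^{k}|>b)} \lesssim B^{2/p}/b$. Feeding these estimates together with a Bernstein bound on $Y_j - \Exp{Y_j}$ is precisely Lemma~\ref{tea-conc-lemma} specialised to $b_i \equiv b$, which gives, with probability at least $1-\delta$,
\[
  |\mu(k) - \tea_{n}(k)| \;\leq\; \frac{3B}{2\,b^{\min(1,p-1)}} \;+\; \frac{2 b\,\log(2/\delta)}{n}.
\]

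It then remains only to choose $\delta$. The hypothesis $b > (3B/\Delta)^{1/\min(1,p-1)}$ is exactly the inequality $\frac{3B}{2\,b^{\min(1,p-1)}} < \Delta/2$, so the bias is already below $\Delta/2$; demanding that the fluctuation term satisfy $\frac{2b\log(2/\delta)}{n} \leq \Delta/2$ forces $\delta \geq 2\exp(-n\Delta/(4b))$. Taking $\delta = 2\exp(-n\Delta/(4b))$ --- the claim is vacuous when this exceeds $1$ --- makes the right-hand side above strictly smaller than $\Delta$ on an event of probability at least $1 - 2\exp(-n\Delta/(4b))$, which is the assertion. I do not expect a real obstacle here; the one point to watch is the $p > 2$ regime, where one must pass from the two bias bounds $B/b^{p-1}$ and $B^{2/p}/b$ to the single form $3B/(2b)$, using $b^{p-1} \geq b$ and $B^{2/p} \leq B$, which hold once $b \geq 1$ and $B \geq 1$ (both innocuous: $B$ may always be enlarged, and the relevant regime has $b$ large). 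The broader content of the lemma is simply that freezing $b$ rather than growing it as $n^{q}$ turns the $n^{1-q}$ in the exponent of Lemma~\ref{obl-mean-min-bounds} into $n/b$, i.e.\ a genuinely exponential rate.
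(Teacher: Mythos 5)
Your proposal is correct and follows essentially the same route as the paper: both invoke the truncated-mean concentration bound (Lemma~\ref{tea-conc-lemma}) with the truncation level frozen at the constant $b$, bound the bias term by $\tfrac{3B}{2\,b^{\min(1,p-1)}} < \Delta/2$ via the hypothesis on $b$, and set $\delta = 2\exp(-n\Delta/(4b))$ to control the fluctuation term. Your remark about collapsing the two $p>2$ bias bounds into $3B/(2b)$ (implicitly using $b\geq 1$, $B\geq 1$) matches a simplification the paper itself makes silently, so there is no substantive difference.
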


\begin{proof}
Using Lemma~\ref{tea-conc-lemma}, by fixing the truncation parameter as $b$, and making simplifications, with probability $1-\delta$, we have:
\begin{align*}
	|\mu(k) - \tea_{n}(k)| \leq
	\begin{cases}
		\frac{3B}{2b^{p-1}} +  \frac{2b\text{log}(2/\delta)}{n} ; \quad p \in (1,2] \\
		\frac{3B}{2b} + \frac{2b\text{log}(2/\delta)}{n} ; \quad p \in (2,\infty)
	\end{cases}
\end{align*}

\textbf{Case 1} $p \in (1,2]$
We're interested to find $b$ and $\delta$ such that for all values of $n$:
\begin{align*}
	\underbrace{\frac{3B}{2b^{p-1}}}_{T_{1}} +  \underbrace{\frac{2b\log(2/\delta)}{n}}_{T_{2}} < \Delta \\
\end{align*}

A sufficient condition for the above equation to be valid is $T_{1} < \Delta/2$ and $T_{2} = \Delta/2$.

To ensure $T_{1} < \Delta/2$, take $b > \Big(\frac{3B}{\Delta}\Big)^{\frac{1}{p-1}}$. 

By equating $T_{2} = \Delta/2$, we get $\delta = 2\text{exp}\Big(-n \frac{\Delta}{4b} \Big)$ where $b$ is what we found above.

\textbf{Case 2} $p \in (2,\infty)$
We're interested to find $b$ and $\delta$ such that for all values of $n$:
\begin{align*}
	\underbrace{\frac{3B}{2b} }_{T_{1}} +  \underbrace{\frac{2b\log(2/\delta)}{n}}_{T_{2}} < \Delta \\
\end{align*}

A sufficient condition for the above equation to be valid is $T_{1} < \Delta/2$ and $T_{2} = \Delta/2$.

To ensure $T_{1} = \Delta/2$, take $b > \frac{3B}{\Delta}$. 

By equating $T_{2} = \Delta/2$, we get $\delta = 2\text{exp}\Big(-n \frac{\Delta}{4b} \Big)$ where $b$ is what we found above.	
\end{proof}

\subsection{Proof of Lemma~\ref{non-obl-cvar-min-bounds}}
Lemma~\ref{non-obl-cvar-min-bounds} follows from Theorem~\ref{cvar-ht-theorem} and Lemma~\ref{mod-var-bound}.

\end{document}